\documentclass[preprint,12pt]{elsarticle}
\usepackage[utf8x]{inputenc}
\usepackage{silence}
\usepackage{pdfx}
\usepackage{listings}
\usepackage{hyperref}
\usepackage{xurl}
\usepackage{float}
\usepackage{changepage}
\usepackage{graphicx}
\usepackage{mathtools}
\usepackage{amsmath}
\usepackage{amssymb}
\usepackage{amsthm}
\usepackage{bm}
\usepackage{multirow}
\usepackage{multicol}
\usepackage{algorithm}
\usepackage{algpseudocode}
\usepackage[nohyperlinks]{acronym}
\makeatletter
\let\mhpc@acrodef\acrodef
\renewcommand*\acrodef[1]{%
  \@ifnextchar[{\mhpc@acrodef@opt{#1}}{\mhpc@acrodef@opt{#1}[#1]}%
}
\def\mhpc@acrodef@opt#1[#2]#3{%
  \newacro{#1}[#2]{#3}%
  \mhpc@acrodef{#1}[#2]{#3}%
}
\makeatother
\usepackage{booktabs}
\usepackage{array}
\usepackage{tikz}
\usetikzlibrary{arrows.meta,calc,positioning}
\usepackage[stretch=10,shrink=10]{microtype}
\usepackage[font=footnotesize,labelfont=bf,labelsep=period]{caption}
\newcommand{\M}{\mathcal{M}}
\newcommand{\U}{\mathcal{U}}
\newcommand{\PP}{\mathcal{P}}
\newcommand{\muh}{\hat{\bm{\mu}}}
\newcommand{\tabrowsep}{\specialrule{0.08pt}{1pt}{1pt}}

\newtheorem{lemma}{Lemma}[section]
\acrodef{CV}[CV]{Computer Vision}
\acrodef{ML}[ML]{Machine Learning}
\acrodef{DL}[DL]{Deep Learning}
\acrodef{AD}[AD]{Anomaly Detection}
\acrodef{MH}[MH]{Mahalanobis}
\acrodef{MHPC}[MH-PatchCore]{Mahalanobis PatchCore}
\acrodef{AE}[AE]{Autoencoder}
\acrodef{ViT}[ViT]{Vision Transformer}
\acrodef{PCA}[PCA]{Principal Component Analysis}
\acrodef{IPCA}[IPCA]{Incremental Principal Component Analysis}
\acrodef{RFF}[RFF]{Random Fourier Features}
\acrodef{SPD}[SPD]{Symmetric Positive Definite}
\acrodef{OAS}[OAS]{Oracle Approximating Shrinkage}
\acrodef{RBLW}[RBLW]{Rao-Blackwell Ledoit-Wolf}
\acrodef{NN}[NN]{Neural Network}
\acrodef{kNN}[kNN]{k-Nearest Neighbours}
\acrodef{GAN}[GAN]{Generative Adversarial Network}
\acrodef{AUROC}[AUROC]{Area Under the Receiver Operating Characteristic Curve}
\acrodef{GeoReS}[GeoReS]{Geometric Residual Selection}
\acrodef{SPADE}[SPADE]{Semantic Pyramid Anomaly Detection}
\acrodef{PANDA}[PANDA]{Adapting Pretrained Features for Anomaly Detection and Segmentation}
\acrodef{PaDiM}[PaDiM]{Patch Distribution Modeling}
\acrodef{DRAEM}[DRAEM]{Discriminatively Trained Reconstruction Embedding}
\acrodef{GRDNet}[GRD-Net]{Generative-Reconstructive-Discriminative Network}
\acrodef{GLASS}[GLASS]{Global and Local Anomaly co-Synthesis Strategy}
\acrodef{MVTecAD}[MVTec AD]{MVTec Anomaly Detection}
\acrodef{BFS}[BFS]{Blow-Fill-Seal}
\acrodef{RAM}[RAM]{Random-Access Memory}
\acrodef{FAISS}[FAISS]{Facebook Artificial Intelligence Similarity Search}
\acrodef{ROI}[ROI]{Region of Interest}
\acrodef{CPU}[CPU]{Central Processing Unit}

\journal{Engineering Applications of Artificial Intelligence}
\date{}
\begin{document}

\begin{frontmatter}

\title{Mahalanobis PatchCore:\\
Covariance-Aware and Streaming-Compatible\\
Industrial Anomaly Detection}

\author[unife,nais]{Niccolò Ferrari}
\ead{niccolo.ferrari@unife.it}

\author[unife,bonfiglioli]{Oligert Osmani}
\ead{oligert.osmani@edu.unife.it}

\author[unife]{Evelina Lamma}

\affiliation[unife]{organization={Department of Engineering, University of Ferrara},
                    addressline={Via Saragat 1},
                    postcode={44122},
                    city={Ferrara},
                    country={Italy}}

\affiliation[nais]{organization={NAIS s.r.l.},
                   addressline={Via Maria Callas 4},
                   postcode={40131},
                   city={Bologna},
                   country={Italy}}

\affiliation[bonfiglioli]{organization={Bonfiglioli Engineering S.r.l.},
                          addressline={Via Amerigo Vespucci 20},
                          postcode={44124},
                          city={Ferrara},
                          country={Italy}}

\acresetall
\begin{abstract}
Industrial visual anomaly detection is usually one-class: normal images are
abundant, while defects are rare, heterogeneous, and often unavailable during
system design.
PatchCore-style retrieval suits this setting because it scores test images from
a memory bank of normal patch features, but the standard Euclidean geometry
ignores feature correlations and its offline construction materialises the full
patch pool before subsampling.

We introduce \textbf{Mahalanobis PatchCore}, a covariance-aware,
streaming-compatible extension of PatchCore.
Its artificial intelligence contribution is a retrieval detector that estimates a
regularised covariance model in reduced feature space and whitens embeddings, so
Euclidean nearest-neighbour search after transformation implements Mahalanobis
retrieval.
A bounded-memory, re-iterable training pipeline builds the memory bank without
storing all normal patches at once, using incremental dimensionality reduction,
online covariance estimation, and streaming aggregation.

The engineering application is automated industrial inspection, where visual
anomaly detection must remain accurate under practical memory limits.
We evaluate the method on a public 15-category industrial anomaly-detection
benchmark and three industrial datasets covering blow-fill-seal strip-ampoule
meniscus inspection, amber-glass-ampoule bottom inspection, and lyophilised-cake
vial inspection.
Mahalanobis PatchCore preserves most offline PatchCore image-level performance
on the public benchmark while reducing peak memory from \(5.41\) to \(2.78\)~GB,
and improves the selected industrial mean image area under the receiver
operating characteristic curve from \(0.981\) to \(0.986\).
\end{abstract}

\begin{keyword}
Mahalanobis Distance \sep Embedding Similarity \sep Bounded-Memory Training \sep
Computer Vision \sep Industrial Automation \sep
Out-of-Distribution Anomaly Detection
\end{keyword}

\end{frontmatter}

\noindent\textbf{Code:} \url{https://github.com/NickF93/MH-PatchCore}

\noindent\textbf{Data Availability:}
\ac{MVTecAD} is publicly available through its original
release~\cite{bergmann2019mvtec}. The industrial datasets used in this study
cannot be publicly released because they are derived from commercial inspection
data subject to non-disclosure restrictions.

\acresetall
\section{Introduction}
\label{sec:intro}

Industrial \ac{AD} is often a normality-modelling problem rather than a
standard supervised classification problem.
Large collections of normal products can usually be acquired from production,
whereas defective samples are scarce, visually heterogeneous, and sometimes not
available when an inspection system is designed.
At the same time, industrial deployment imposes practical constraints that are
not secondary to accuracy.
The detector should be stable, sufficiently diagnostic for engineering use, and
compatible with the memory footprint and runtime budgets of the target line.
Recent real-time and low-latency industrial \ac{AD} systems make this
requirement explicit: model complexity can limit real-time processing, while
strict runtime limits are tied to production rate and economic
viability~\cite{gudovskiy2022cflow,batzner2024efficientad}.
The central challenge is therefore to learn a reliable anomaly score from
normal data alone while keeping the inference mechanism simple enough to remain
operationally useful.

Patch-based retrieval methods are a natural fit for this setting because they
represent normality through local feature support rather than through an
explicit defect model.
Among them, \textbf{PatchCore}~\cite{Roth2021} has become a reference baseline:
a frozen backbone extracts patch descriptors, a coreset compresses the normal
memory bank, and test patches are scored by nearest-neighbour distance to that
bank.
This design is attractive because the learned model is essentially a geometry
over patch embeddings plus a fixed support set.
It is also precisely this simplicity that makes its limitations visible.
First, the standard Euclidean retrieval geometry implicitly treats feature
directions as equally scaled and uncorrelated, although deep patch embeddings
are often anisotropic and correlated.
Second, the usual offline construction materialises the complete normal patch
pool before subsampling it, so peak training memory follows the number and
dimensionality of extracted patches rather than the size of the final support
set.
This may be acceptable for small benchmark categories, but it is less
satisfactory when image counts, resolutions, or industrial deployment
constraints grow.

This paper introduces \textbf{\ac{MHPC}}, a
covariance-aware and streaming-compatible extension of PatchCore.
The method keeps the retrieval-based inference paradigm intact, but changes the
space in which retrieval is performed and the way the memory bank is built.
Patch embeddings are reduced and whitened using a regularised covariance model,
so Euclidean nearest-neighbour search in the transformed space implements
Mahalanobis retrieval in the reduced coordinates.
Training is organised as a bounded-memory, re-iterable pipeline: the method
fits the reduction, estimates the covariance statistics, and constructs the
memory bank without requiring the complete patch pool to reside in memory at
once.
Thus, the proposal is not evaluated here as an autonomous continual-learning
system.
Rather, \acs{MHPC} is a PatchCore-style detector whose bounded-memory
training procedure consumes normal data in mini-batches. Additional curated
normal batches can be included while the detector is still being fitted, before
the reduction, covariance transform, and memory bank are finalised for
deployment.
In the reported experiments, however, the reduction, covariance transform, and
memory bank are finalised before inference.

The paper makes three main contributions:
\begin{enumerate}
  \item \textbf{Covariance-aware Mahalanobis scoring in whitened space.}
        We replace isotropic nearest-neighbour retrieval with an equivalent
        Euclidean search in a whitened reduced space, preserving PatchCore's
        support-set inference while making distances sensitive to feature
        covariance.

  \item \textbf{Bounded-memory streaming-compatible training.}
        We formulate training as a multi-pass but bounded-memory procedure that
        fits the reducer, estimates covariance statistics, and constructs the
        support set without materialising all normal patches simultaneously.

  \item \textbf{Empirical evaluation of detection quality and resource
        footprint.}
        We compare the proposed geometry and memory-bank construction against
        the original PatchCore baseline, Euclidean streaming controls, and
        selected budget/constructor ablations on \ac{MVTecAD} and selected
        industrial inspection datasets, reporting both detection quality and
        resource telemetry.
\end{enumerate}

We evaluate the method in two complementary settings.
The public component is \acs{MVTecAD} \cite{bergmann2019mvtec}, considered through
its 15 category-specific datasets; the industrial component consists of three
image-level inspection datasets covering meniscus, ampoule-bottom, and
lyophilised-product inspection.
This separation keeps the benchmark and deployment questions distinct.
On \acs{MVTecAD}, where pixel masks are available, pixel-wise metrics are reported
for localisation completeness.
On the industrial datasets, where masks are not available and the operational
decision is image-level, the comparison focuses on image-level metrics and
\acs{RAM} telemetry, with a Meniscus subset ladder used to study
\acs{RAM}-constrained training
budgets.

The remainder of the paper is organised as follows.
Section~\ref{sec:related} reviews the main lines of related work.
Section~\ref{sec:method} formalises \acs{MHPC} and its training/inference
pipeline.
Section~\ref{sec:experiments} describes the experimental protocol and the
evaluated configurations.
Section~\ref{sec:results} presents category-resolved \acs{MVTecAD} results together
with the industrial image-level comparison and the MVTec-only pixel-level
completeness results.
Sections~\ref{sec:discussion} and~\ref{sec:conclusion} discuss the findings and
conclude the paper.

\section{Related Work}
\label{sec:related}

\subsection{Embedding-Based and Memory-Bank Anomaly Detection}

Embedding-based \ac{AD} methods avoid learning an explicit image reconstruction
model. Instead, they represent normality in a feature space and detect test
samples whose local descriptors fall outside the nominal support. This idea is
particularly effective in industrial inspection, where strong pretrained
features often provide useful local descriptors even when only defect-free
training images are available.
\ac{SPADE}~\cite{cohen2020spade} showed that nearest-neighbour matching on
pretrained feature pyramids can support sub-image anomaly localisation with
minimal task-specific training.
\acs{PANDA}~\cite{reiss2021panda} further studied pretrained representations in
one-class anomaly detection and segmentation, highlighting their strength as
well as the risk of feature collapse when they are adapted to the normal class.
\textbf{\ac{PaDiM}}~\cite{Defard2021} models each spatial location of the feature map
with a multivariate Gaussian and scores deviations with a Mahalanobis distance.
It therefore makes covariance structure explicit, but ties the statistical model
to aligned feature-map positions.
\textbf{PatchCore}~\cite{Roth2021} takes a different route: it stores nominal
patch descriptors in a global memory bank, compresses that bank with greedy
coreset selection, and performs nearest-neighbour retrieval in Euclidean space.
This makes PatchCore a strong reference point for one-class industrial visual
inspection, but its standard scoring geometry treats feature directions as
isotropic after embedding extraction.
\acs{MHPC} is positioned at this interface. It keeps the PatchCore memory-bank
view of normal support, while introducing a global covariance-aware whitening
transform before retrieval.

\subsection{Reconstruction, Distillation, and Synthesis}

Reconstruction-based approaches learn to reproduce nominal images and interpret
large reconstruction residuals as evidence of abnormality.
\ac{GAN}-based methods such as GANomaly~\cite{Akccay2018} and
Skip-GANomaly~\cite{Akccay2019} use adversarial encoder--decoder objectives to
model the normal image manifold.
Feature-reconstruction methods move the reconstruction target from pixels to
pretrained representations; Reverse Distillation~\cite{deng2022reverse}, for
example, trains a decoder to recover multi-scale teacher features from a compact
one-class embedding.
\ac{DRAEM}~\cite{zavrtanik2021draemdiscriminativelytrained} shifts this family toward
discriminative surface inspection by training with synthetic texture anomalies
and learning a joint reconstructive-discriminative representation.
Related industrial methods, such as
\ac{GRDNet}~\cite{ferrari2023grdnet,ferrari2024integration}, combine
generative, reconstructive, and discriminative modules with task-specific
attention mechanisms, while recent anomaly-synthesis work such as
\ac{GLASS}~\cite{chen2024unifiedanomalysynthesisstrategy} studies how to generate
broader and more controllable synthetic defects.
These methods are complementary to the present work: they modify the training
objective or the anomaly-generation process, whereas \acs{MHPC} keeps the
frozen-feature retrieval paradigm and asks how its retrieval geometry and memory
bank can be made covariance-aware and memory-bounded.

\subsection{Mahalanobis Distance in Anomaly Detection}

Mahalanobis distance was introduced as a generalised statistical distance by
Mahalanobis~\cite{mahalanobis1936generalised}.
Covariance-based distances have since become a classical tool for multivariate
outlier and leverage diagnostics~\cite{rousseeuw1990unmasking}, and their role
as a general multivariate distance has been reviewed in chemometrics, pattern
recognition, and process control~\cite{demaesschalck2000mahalanobis}.
In visual \ac{AD}, Mahalanobis scoring has been reintroduced to account
for feature correlations in high-dimensional embedding spaces.
Rippel et al.~\cite{rippel2021gaussianad} model normal images with a global
multivariate Gaussian in pretrained feature space, using Mahalanobis distance as
the anomaly score.
\acs{PaDiM}~\cite{Defard2021} applies per-location Mahalanobis distance to model the
normal distribution of patch embeddings, computing a separate covariance matrix
per spatial position.
Mahalanobis scoring has also been studied with hybrid transformer and
convolutional patch representations, where multivariate Gaussian models are fit
to normal embeddings and test patches are scored by their distance to those
models~\cite{dini2023visualanomalydetection}.
Our use of Mahalanobis geometry is deliberately different. Rather than replacing
the memory bank with separate local Gaussian classifiers, \acs{MHPC} estimates
a single regularised covariance over the reduced global patch population and
uses the resulting whitening map to change the metric of PatchCore-style
nearest-neighbour retrieval.

\subsection{Streaming-Compatible Training}

Memory-bank methods inherit a scaling issue from their main strength: dense
patch-level coverage of the nominal distribution can require storing a very
large number of descriptors. PatchCore mitigates this pressure through greedy
coreset subsampling~\cite{Roth2021}, but the standard formulation still builds
the candidate patch pool before selecting the representative subset.
This distinction matters when the training set is large or when memory telemetry
is part of the evaluation itself.
Resource-aware visual anomaly detection has also been studied from the inference
side: EfficientAD~\cite{batzner2024efficientad} targets millisecond-level
latencies by combining a lightweight feature extractor, student--teacher
prediction, and a compact global-analysis branch.
That line of work is complementary to ours: it optimises inference architecture
and throughput, whereas this paper studies the bounded-memory construction of a
PatchCore-style support set.
In this paper, ``streaming'' therefore refers to a bounded-memory,
streaming-compatible training procedure over a re-iterable nominal dataset, not
to continual one-pass adaptation.
The contribution is the joint treatment of incremental dimensionality reduction,
streaming covariance estimation, and bounded-memory support-set construction
inside a PatchCore-style retrieval pipeline.

\section{Method}
\label{sec:method}

\acs{MHPC} learns four objects from normal data:
\((R,\muh,L,\M)\).
The map \(R\) places patch descriptors in a reduced coordinate system, while
\(\muh\) and \(L\) define the covariance-aware whitening transform used for
retrieval.
The bank \(\M\) is the bounded support set queried at inference time.
The following subsections first fix the PatchCore baseline notation, then
formalise covariance-aware retrieval, bounded-memory bank construction, the
two-pass \ac{GeoReS} ablation, and image scoring.
Appendix~\ref{app:symbols} provides a compact notation reference for the
symbols used throughout the method.

Figure~\ref{fig:mhpatchcore-overview} gives a high-level view of the training
and inference flow before the individual components are formalised.

\begin{figure}[htbp]
  \centering
  \includegraphics[width=\textwidth]{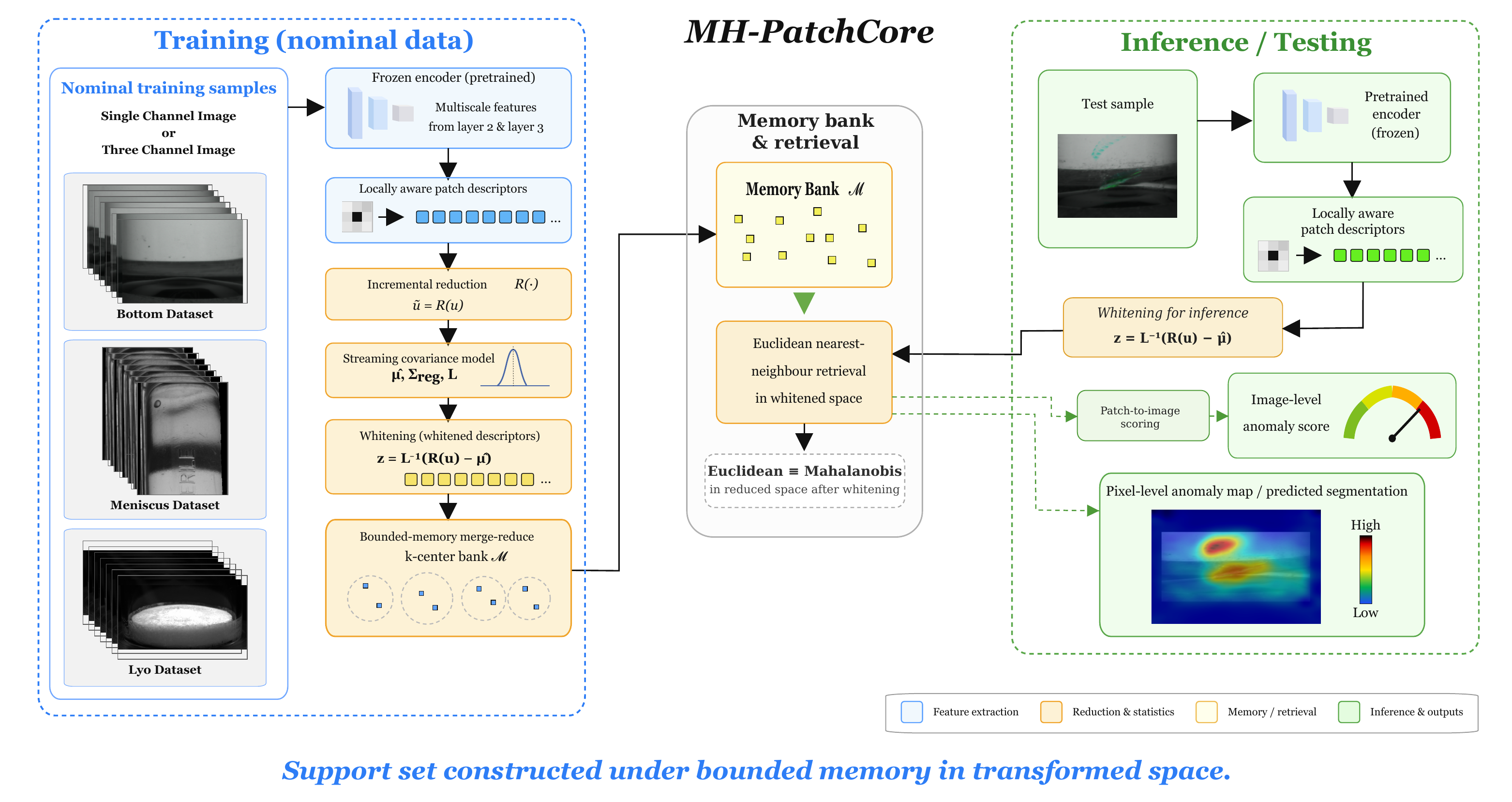}
  \caption{Overview of \acs{MHPC} training and inference. Nominal
  single-channel or three-channel inspection images are mapped by a frozen
  encoder to locally aware patch descriptors, reduced incrementally, whitened
  by a regularised covariance model, and summarised into a bounded-memory
  \(k\)-center bank. At inference, the same encoder, reduction, and whitening
  transform are applied before Euclidean nearest-neighbour retrieval in the
  whitened space, which is equivalent to Mahalanobis retrieval in the reduced
  coordinates. The detector returns image-level anomaly scores and spatial
  anomaly maps with predicted segmentations.}
  \label{fig:mhpatchcore-overview}
\end{figure}

\subsection{Problem Setup and PatchCore Formulation}
\label{sec:method:patchcore}

We use the following notation to describe the detector independently of whether
the normal patch pool is materialised offline or consumed under a
bounded-memory streaming regime.
We consider the standard one-class anomaly-detection setting, where the
training set
\begin{equation}
  \mathcal{D}_{\mathrm{train}}=\{x_i\}_{i=1}^{N},
  \qquad x_i \sim p_{\mathrm{normal}},
\end{equation}
contains only normal images sampled from the normal production distribution.
At test time, the detector assigns each image a real-valued anomaly score
\begin{equation}
  s:\mathcal{X}\to\mathbb{R},
\end{equation}
with larger values of \(s(x)\) indicating stronger evidence of departure from
the normal class.
A frozen feature extractor together with a patch-construction operator maps each
image $x$ to a finite set of patch descriptors
\begin{equation}
  \PP(x)=\{u^{(j)}\}_{j=1}^{n_x}, \qquad u^{(j)}\in\mathbb{R}^{d_0}.
\end{equation}
The full pool of normal training patches is therefore
\begin{equation}
  \U=\bigcup_{x\in\mathcal{D}_{\mathrm{train}}}\PP(x).
\end{equation}
Here \(\U\) is a notional pool of descriptors: the offline reference can
materialise it, whereas the streaming variants below process its elements in
mini-batches.

PatchCore~\cite{Roth2021} represents normality by a memory bank
$\M\subseteq\U$ and performs anomaly detection by nearest-neighbour retrieval in
patch space.
In its classical form, this bank can be written as greedy coreset subsampling
with retained fraction \(\eta\),
\begin{equation}
  \M=\operatorname{GreedyCoreset}(\U,\eta), \qquad \eta\in(0,1],
\end{equation}
and, in the squared-distance convention used throughout this paper, each query
patch is scored by Euclidean nearest-neighbour distance:
\begin{equation}
  a(u)=\min_{m\in\M}\|u-m\|_2^2, \qquad u\in\PP(x).
\end{equation}
This convention differs from the unsquared form only by a monotone
transformation, so it preserves nearest-neighbour identities while fixing the
score scale used by the downstream scoring rule.
This formulation is attractive because inference depends only on a geometry over
patch embeddings and a fixed support set.
At the same time, it reveals the two limitations targeted in this paper.
The first is geometric: Euclidean retrieval ignores covariance structure in deep
patch representations.
The second is computational: the classical pipeline builds $\M$ only after
materialising the full patch pool $\U$, so peak training-time memory scales with
the number of extracted patches rather than with the final support-set size.

Figure~\ref{fig:patchcore-mh-concept} summarises the conceptual difference
between the offline PatchCore pipeline and the streaming covariance-aware
variant developed in this paper.
The right-hand side introduces the notation formalised in the following
subsections.

\begin{figure}[htbp]
  \centering
  \includegraphics[width=\textwidth]{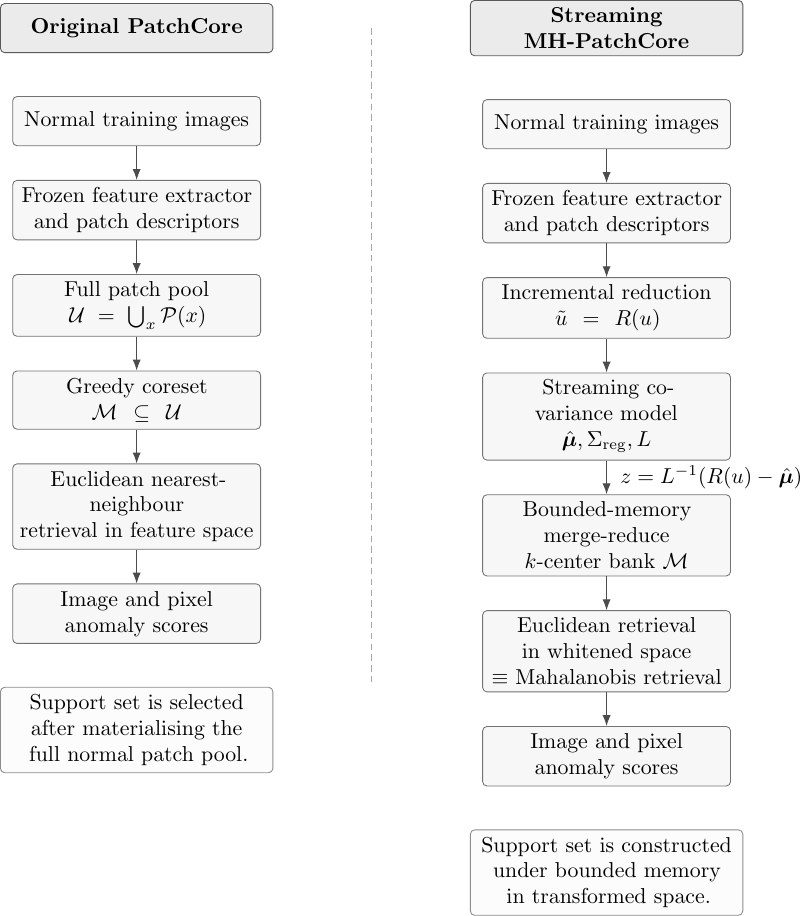}
  \caption{Conceptual comparison between the offline PatchCore baseline and the
  streaming covariance-aware variant studied in this paper. The diagram
  contrasts the point at which the support set is built and the geometry used
  for nearest-neighbour retrieval.}
  \label{fig:patchcore-mh-concept}
\end{figure}

\subsection{Covariance-Aware Retrieval in Reduced Space}
\label{sec:method:whitening-regularization}

\acs{MHPC} modifies PatchCore along two coupled axes: the geometry used for
retrieval and the procedure used to build the memory bank.
We first map each patch descriptor to a reduced coordinate system,
\begin{equation}
  \tilde{u}=R(u), \qquad R:\mathbb{R}^{d_0}\to\mathbb{R}^{k},
\end{equation}
where \(R\) denotes the fitted dimensionality reducer learned from normal
training patches.
In the \ac{PCA}/\ac{IPCA} instantiation used in the reported experiments, this
reducer is the centred projection
\begin{equation}
  R(u)=W^\top(u-\bar{u}_R),
\end{equation}
with \(W\in\mathbb{R}^{d_0\times k}\) denoting the learned projection matrix and
\(\bar{u}_R\) the reducer mean.
Working in reduced space serves two purposes.
It suppresses redundancy before covariance estimation and makes global
second-order modelling numerically tractable.

Let $\muh$ and $\hat{\Sigma}$ denote the mean and empirical covariance of the
reduced normal patches.
In the streaming setting these statistics are accumulated from mini-batches via
numerically stable moment updates.
The full streaming covariance update is summarised in
Appendix~\ref{app:welford}.
Given current state $(n,\muh,M_2)$ and batch summary
$(n_b,\muh_b,M_{2,b})$, we use
\begin{align}
  \bm{\delta} &= \muh_b-\muh, \\
  \muh &\leftarrow \muh + \frac{n_b}{n+n_b}\,\bm{\delta}, \\
  M_2 &\leftarrow M_2 + M_{2,b}
        + \frac{n\,n_b}{n+n_b}\,\bm{\delta}\bm{\delta}^\top, \\
  n &\leftarrow n+n_b,
\end{align}
followed by
\begin{equation}
  \hat{\Sigma}=\frac{M_2}{n-1}.
\end{equation}
The covariance is then regularised to a symmetric positive-definite matrix
$\Sigma_{\mathrm{reg}}$.
In the canonical baseline this regularisation is based on shrinkage toward the
scaled identity, optional eigenvalue flooring, and stable Cholesky
factorisation; the numerical details are given in
Appendix~\ref{app:covreg-alternatives}.

Let \(\delta\) be the adaptive Cholesky jitter from
Appendix~\ref{app:covreg-alternatives}, and define
\begin{equation}
  \Sigma_L = \Sigma_{\mathrm{reg}} + \delta I = LL^\top .
\end{equation}
We define the whitened embedding
\begin{equation}
  z(u)=L^{-1}\bigl(R(u)-\muh\bigr).
\end{equation}
For any two reduced embeddings $\tilde{u}$ and $\tilde{v}$,
\begin{equation}
  \|z(u)-z(v)\|_2^2
  =(\tilde{u}-\tilde{v})^\top \Sigma_L^{-1}(\tilde{u}-\tilde{v}),
\end{equation}
so Euclidean retrieval in whitened space is equivalent to Mahalanobis retrieval
in the reduced coordinates under the effective covariance \(\Sigma_L\).
A formal statement and proof of this equivalence are given in
Lemma~\ref{lem:whitening-mahalanobis} in
Appendix~\ref{app:whitening-equivalence}.
This is the central geometric step of \acs{MHPC}: the retrieval structure of
PatchCore is preserved, but the isotropic Euclidean geometry is replaced by a
covariance-aware one.

\subsection{Streaming Memory-Bank Construction}
\label{sec:method:memory-bank-construction}

The second contribution concerns how the memory bank is constructed.
Instead of materialising the entire patch pool and compressing it afterwards, we
seek a bounded support set directly in the transformed space.
Let
\begin{equation}
  Z=\{z_n\}_{n=1}^{N_Z}, \qquad z_n=z(u_n),
\end{equation}
denote the whitened stream of normal patches.
This notation refers to the sequence obtained by concatenating transformed
mini-batches; the streaming constructors do not require the full set to be
materialised.
The target object is a bank $\M$ of size at most $K$ that preserves coverage of
this set.
The merge-reduce constructor applies deterministic farthest-first coverage steps,
\begin{equation}
  z^\star\in\operatorname*{arg\,max}_{z\in Z}
  \min_{c\in C_\ell}\|z-c\|_2,\qquad
  C_{\ell+1}=C_\ell\cup\{z^\star\},
\end{equation}
as a greedy surrogate for the classical $k$-center objective.
This coverage principle retains representative normal patches for
nearest-neighbour retrieval.

Training is therefore cast as the map
\begin{equation}
  \mathcal{D}_{\mathrm{train}} \mapsto (R,\muh,L,\M),
\end{equation}
which records the mathematical detector state needed for scoring.
The construction decomposes into three conceptual stages.
First, the reducer $R$ is fit on the stream of patch embeddings.
Second, reduced patch statistics are accumulated to obtain
$(\muh,\Sigma_{\mathrm{reg}})$ and the whitening operator $L^{-1}$.
Third, the whitened patches are passed to a bounded-memory constructor
\begin{equation}
  \M=\mathcal{G}(Z;K).
\end{equation}
In the canonical baseline, $\mathcal{G}$ is instantiated by a merge-reduce
$k$-center strategy: local chunks are summarised by farthest-first
representatives, these summaries are recursively merged and reduced, and the
final bank is obtained without ever storing the full set $Z$ in memory.
The resulting bank contains representatives rather than centroids, which is the
appropriate object for retrieval-based inference.

This streaming formulation should be understood as a bounded-memory training
procedure, not as an online post-deployment adaptation rule.
Its purpose is to make PatchCore-style bank construction feasible when the full
normal patch set is too large to materialise comfortably in \acs{RAM}.

Algorithm~\ref{alg:mh-patchcore} summarises the canonical \acs{MHPC} training
and inference procedure.

\begin{algorithm}[H]
  \caption{Canonical \acs{MHPC} training and inference}
  \label{alg:mh-patchcore}
  \footnotesize
  \begin{algorithmic}[1]
    \Require Normal training set \(\mathcal{D}_{\mathrm{train}}\), query image
    \(x\), patch map \(\PP(\cdot)\)
    \Require Reducer policy \(R\), covariance regularisation policy, bank budget
    \(K\), bounded-memory constructor \(\mathcal{G}\)
    \Ensure Detector state \((R,\muh,L,\M)\) and image score \(s(x)\)
    \Statex \textbf{Training}
    \State Initialise the reducer fit for \(R\)
    \ForAll{normal-image mini-batches \(B\subset\mathcal{D}_{\mathrm{train}}\)}
      \State Extract patch descriptors \(u\in\PP(x_i)\) for all \(x_i\in B\)
      \State Update the reducer fit with the current patch mini-batch
    \EndFor
    \State Finalise the reducer \(R\)
    \State Initialise streaming moments \((n,\muh,M_2)\)
    \ForAll{normal-image mini-batches \(B\subset\mathcal{D}_{\mathrm{train}}\)}
      \State Compute reduced patches \(\tilde u=R(u)\) for all
      \(u\in\PP(x_i),\,x_i\in B\)
      \State Update \((n,\muh,M_2)\) with the reduced patch mini-batch
    \EndFor
    \State Form \(\hat{\Sigma}=M_2/(n-1)\), regularise it, and factor
    \(\Sigma_L=\Sigma_{\mathrm{reg}}+\delta I=LL^\top\)
    \State Initialise the bounded-memory constructor \(\mathcal{G}\)
    \ForAll{normal-image mini-batches \(B\subset\mathcal{D}_{\mathrm{train}}\)}
      \State Compute \(z(u)=L^{-1}(R(u)-\muh)\) for all
      \(u\in\PP(x_i),\,x_i\in B\)
      \State Update \(\mathcal{G}\) with the whitened patch mini-batch
    \EndFor
    \State Finalise the memory bank \(\M=\mathcal{G}(Z;K)\)
    \Statex \textbf{Inference}
    \State Compute \(z(u)=L^{-1}(R(u)-\muh)\) for all \(u\in\PP(x)\)
    \State Compute \(a(u)=\min_{m\in\M}\|z(u)-m\|_2^2\) for all
    \(u\in\PP(x)\)
    \State Compute \(s(x)\) using Eqs.~\eqref{eq:weight}--\eqref{eq:score}
    \State \Return \((R,\muh,L,\M),\,s(x)\)
  \end{algorithmic}
\end{algorithm}

\subsection{A Two-Pass Ablation: Geometric Residual Selection}
\label{sec:method:geores}

The canonical constructor above uses a single traversal of the transformed
patch stream at the memory-bank construction stage.
We also evaluate \ac{GeoReS}, a deliberately stronger but less strictly
streaming ablation that gives the constructor one additional pass.
The goal is not to change the scoring rule or the Mahalanobis geometry, but to
ask whether a coverage coreset improves when it is first allowed to estimate
where the provisional bank leaves geometric residuals.

Let $\mathcal{G}_K$ denote the merge-reduce \(k\)-center constructor with final
budget \(K\), and let \(Z\) again denote the whitened normal patch stream.
The first pass builds a provisional coverage set
\begin{equation}
  P=\mathcal{G}_K(Z).
\end{equation}
For any normal patch \(z\in Z\), its residual with respect to this provisional
bank is
\begin{equation}
  r_P(z)=\min_{p\in P}\|z-p\|_2 .
\end{equation}
Large values of \(r_P(z)\) identify normal regions that are poorly covered by
the provisional \(k\)-center solution.
These regions are geometric tails in the sense used here: they are not labelled
anomalies, and they are not selected from test data, but they occupy sparse or
under-represented parts of the normal embedding support.

In the second pass, the memory budget is split into a main coverage budget and
a residual-tail budget,
\begin{equation}
  \tilde K_0=\operatorname{round}(\alpha K),\qquad
  K_0=\min\{K-1,\max\{1,\tilde K_0\}\},\qquad
  K_t=K-K_0,
\end{equation}
with \(0<\alpha<1\), so both \(K_0\) and \(K_t\) remain positive.
A main bank is built as
\begin{equation}
  M_0=\mathcal{G}_{K_0}(Z),
\end{equation}
while a bounded candidate set keeps the \(q\) normal patches with largest
provisional residuals,
\begin{equation}
  T_q=\operatorname{Top}_{q}\{\,z\in Z:\ r_P(z)\,\}.
\end{equation}
The final \(K_t\) tail representatives are then chosen greedily from \(T_q\),
balancing large residuals with diversity relative to the main bank:
\begin{equation}
  t_j \in \operatorname*{arg\,max}_{z\in T_q\setminus\{t_1,\ldots,t_{j-1}\}}
  \min_{m\in M_0\cup\{t_1,\ldots,t_{j-1}\}}\|z-m\|_2 ,
  \quad j=1,\ldots,K_t .
\end{equation}
The resulting bank is the budgeted, deduplicated completion
\begin{equation}
  \M_{\mathrm{GeoReS}}
  =\Pi_K\!\left(M_0\cup\{t_1,\ldots,t_{K_t}\};Z\right).
\end{equation}
Here \(\Pi_K\) removes duplicate representatives, fills any remaining capacity
from the coverage pool, and truncates the bank to the target budget.

\ac{GeoReS} is therefore an informed \(k\)-center variant.
Its information remains purely unsupervised and normal-class: the first pass
only exposes where the normal support is weakly represented by a provisional
coverage set.
Because this requires two memory-bank passes instead of one, we report it as an
ablation rather than as the canonical \acs{MHPC} configuration.

\subsection{Inference and Image Scoring}
\label{sec:method:inference}

At inference time, a test image $x$ is mapped to its patch set $\PP(x)$ and
each patch is transformed through the learned reduction and whitening maps:
\begin{equation}
  z(u)=L^{-1}\bigl(R(u)-\muh\bigr), \qquad u\in\PP(x).
\end{equation}
We then define the squared retrieval distance
\begin{equation}
  \delta(u,m)=\|z(u)-m\|_2^2, \qquad m\in\M,
\end{equation}
and the patch-level anomaly score
\begin{equation}
  a(u)=\min_{m\in\M}\delta(u,m).
\end{equation}
We keep the squared-distance convention because exact
\ac{FAISS}~\cite{johnson2019billion} \(\ell_2\) search\footnote{This convention is
retained from the original PatchCore implementation. Squaring is monotone for
non-negative distances, so it does not change nearest-neighbour identities, but
it fixes the scale used by the downstream scoring rule.} returns squared
Euclidean distances, and all subsequent
scoring operations are defined on these squared distances.

The unreweighted image-level control is simply
\begin{equation}
  s_{\max}(x)=\max_{u\in\PP(x)} a(u).
\end{equation}
The canonical \acs{MHPC} baseline instead uses the PatchCore Eq.~(7)-style
reweighting applied to these squared distances.
Let
\begin{align}
  u^* &= \arg\max_{u\in\PP(x)} a(u), \\
  m^* &= \arg\min_{m\in\M} \delta(u^*,m),
\end{align}
and let $\mathcal{N}_b(m^*)$ denote the $b$ nearest neighbours of $m^*$ inside
$\M$, including \(m^*\) itself.
Equivalently, \(\mathcal{N}_b(m^*)\) contains \(m^*\) and its \(b-1\) nearest
distinct neighbours in the bank, so the numerator below is the \(m^*\) term in
the denominator.
Using the following numerically stabilised form, we write
\begin{align}
  \tau(x) &=
  \max_{m\in\mathcal{N}_b(m^*)}\delta(u^*,m), \label{eq:tau}\\
  w(x) &=
  1 -
  \frac{\mathrm{e}^{a(u^*)-\tau(x)}}
       {\displaystyle\sum_{m\in\mathcal{N}_b(m^*)}
        \mathrm{e}^{\delta(u^*,m)-\tau(x)}}, \label{eq:weight}\\
  s(x) &= w(x)\,a(u^*). \label{eq:score}
\end{align}
The role of the reweighting is to modulate the maximal patch anomaly by the
local support structure around its nearest memory-bank element.
Pixel-level anomaly maps are obtained by placing patch scores back on the
spatial grid and upsampling the resulting score map to image resolution.

\section{Experimental Setup}
\label{sec:experiments}

\subsection{Datasets}

\acs{MHPC} is evaluated on one public benchmark family and three selected
industrial inspection datasets.
The public benchmark is \ac{MVTecAD}~\cite{bergmann2019mvtec}.
We treat it not as a single pooled dataset but as the collection
\(\mathcal{C}_{\text{MVTec}}\) of its 15 category-specific one-class datasets.
The industrial component includes \textbf{Meniscus}, \textbf{Bottom}, and
\textbf{Lyo}\footnote{Due to non-disclosure agreements, we cannot release the
industrial datasets or identify the full commercial products from which they
are derived. We therefore report aggregate dataset statistics and use short
descriptive names.}.
Meniscus covers \ac{BFS} strip-ampoule meniscus inspection, Bottom
covers the bottom region of amber glass ampoules containing viscous liquid, and
Lyo covers lyophilised-cake inspection in transparent glass vials.
Meniscus and Bottom are evaluated on predefined \ac{ROI} crops.
This protocol reflects the intended use case of the method: evaluation should
cover a public benchmark with localisation annotations and a realistic
industrial setting with image-level labels.
Table~\ref{tab:datasets} summarises the evaluation sources, label
availability, and split-level statistics used throughout the experiments.

\begin{table}[htbp]
  \centering
  \scriptsize
  \setlength{\tabcolsep}{3pt}
  \begin{tabular}{@{}>{\raggedright\arraybackslash}p{2.55cm}>{\raggedright\arraybackslash}p{1.9cm}>{\raggedright\arraybackslash}p{1.65cm}rrr@{}}
    \toprule
    Dataset & Type & Labels & Train normal & Test & Anom.\@ eval \\
    \midrule
    \acs{MVTecAD} Bottle & Object & Image + pixel & 209 & 83 & 63 \\
    \acs{MVTecAD} Cable & Object & Image + pixel & 224 & 150 & 92 \\
    \acs{MVTecAD} Capsule & Object & Image + pixel & 219 & 132 & 109 \\
    \acs{MVTecAD} Carpet & Texture & Image + pixel & 280 & 117 & 89 \\
    \acs{MVTecAD} Grid & Texture & Image + pixel & 264 & 78 & 56 \\
    \acs{MVTecAD} Hazelnut & Object & Image + pixel & 391 & 110 & 70 \\
    \acs{MVTecAD} Leather & Texture & Image + pixel & 245 & 124 & 92 \\
    \acs{MVTecAD} Metal nut & Object & Image + pixel & 220 & 115 & 93 \\
    \acs{MVTecAD} Pill & Object & Image + pixel & 267 & 167 & 141 \\
    \acs{MVTecAD} Screw & Object & Image + pixel & 320 & 160 & 119 \\
    \acs{MVTecAD} Tile & Texture & Image + pixel & 230 & 117 & 84 \\
    \acs{MVTecAD} Toothbrush & Object & Image + pixel & 60 & 42 & 30 \\
    \acs{MVTecAD} Transistor & Object & Image + pixel & 213 & 100 & 40 \\
    \acs{MVTecAD} Wood & Texture & Image + pixel & 247 & 79 & 60 \\
    \acs{MVTecAD} Zipper & Object & Image + pixel & 240 & 151 & 119 \\
    Meniscus & \acs{BFS} strip-ampoule meniscus \acs{ROI} & Image only & 3801 & 1297 & 346 \\
    Bottom & Amber-glass-ampoule bottom \acs{ROI} & Image only & 2623 & 1301 & 645 \\
    Lyo & Lyophilised-cake vial & Image only & 3000 & 982 & 482 \\
    \bottomrule
  \end{tabular}
  \caption{Benchmark and industrial datasets used in the evaluation protocol.
  \acs{MVTecAD} is expanded by category because each category defines a separate
  one-class anomaly-detection dataset. The ``Anom.\@ eval'' column reports the
  anomalous labels entering metric computation; for \acs{MVTecAD} this follows the
  fixed image--mask preprocessing used for pixel-aware evaluation, and for the
  industrial datasets it follows the image-level test labels.}
  \label{tab:datasets}
\end{table}

\subsubsection{\acs{MVTecAD}}
\acs{MVTecAD} is the standard benchmark for unsupervised industrial \ac{AD}.
It comprises 15 categories: 10 object types (bottle, cable, capsule,
hazelnut, metal nut, pill, screw, toothbrush, transistor, zipper) and 5 texture
types (carpet, grid, leather, tile, wood).
We denote the set of categories by \(\mathcal{C}_{\text{MVTec}}\).
Each category \(c\in\mathcal{C}_{\text{MVTec}}\) defines its own training split
of only normal images and its own test split with both normal and anomalous
images, together with pixel-level binary masks.
Accordingly, all MVTec metrics are computed category-wise and reported with a
macro-average across categories rather than by pooling all categories into a
single dataset.

\subsubsection{Industrial Datasets}
Meniscus focuses on the meniscus region of liquid-filled \acs{BFS} strip ampoules
acquired from strip images.
The dominant defects are floating particles in the meniscus area. This region is
challenging because meniscus turbulence, reflections, and local appearance
variability can obscure particles and reduce the reliability of standard
blob-based or threshold-based inspection.

Bottom covers the bottom region of amber glass ampoules containing viscous
liquid. The target anomalies are particle defects, especially glass particles.
This case is challenging because heavy glass particles can exhibit limited
motion across the acquired frames, while normal bottom-region variability makes
them difficult to separate from nominal appearance. For disclosure reasons, the
Bottom input is provided as a disclosure-safe three-channel representation
derived from the three rank images rather than as the raw acquisition sequence.

Lyo covers lyophilised-cake inspection in transparent glass vials. The evaluated
anomalies include visible foreign particles, missing cake, and incorrect cake
volume. This case provides an industrial product-appearance inspection setting
with difficulty comparable to the other selected industrial datasets, because
the relevant defects may be small, localised, or expressed through product-level
appearance changes.

Annotations are available only at image level for all industrial datasets, so
they are evaluated with image-level metrics only.

\subsection{Experimental Settings}

Feature maps are extracted from the second and third residual stages of a frozen
WideResNet-50-2 backbone~\cite{zagoruyko2016wide} pre-trained on
ImageNet-1k~\cite{deng2009imagenet}.
Locally aware descriptors are built by patchifying the selected feature maps
with patch size $3$ and stride $1$, aligning them to a common spatial grid, and
aggregating them into $d_0=1024$-dimensional patch embeddings.
All images are resized to 256 pixels, centre-cropped to \(224\times224\), and
normalised with the ImageNet statistics associated with the backbone.
No training augmentation is applied for either the original batch PatchCore
baseline or the canonical streaming baseline.

The canonical \acs{MHPC} configuration uses \acs{IPCA} with retained
variance $\rho=0.99$, fixed covariance shrinkage with $\lambda=0.07$,
eigenvalue floor ratio $10^{-8}$, adaptive jittered Cholesky factorisation, and
merge-reduce $k$-center aggregation with budget $K=1000$ and local coreset size
$m_c=256$.
The retained dimension is selected as
\begin{equation}
  k=\min\left\{
  q:\frac{\sum_{i=1}^{q}\nu_i}{\sum_{i=1}^{d_0}\nu_i}\ge \rho
  \right\},
\end{equation}
where $\{\nu_i\}_{i=1}^{d_0}$ are the explained-variance components sorted in
non-increasing order.
See Appendix~\ref{app:covreg-alternatives} for covariance regularisation
details and Appendix~\ref{app:aggregation-alternatives} for the memory-bank
aggregation operators used in this study.

The \acs{FAISS}~\cite{johnson2019billion} flat \(\ell_2\) index is used for exact
nearest-neighbour search; accordingly, the reported retrieval scores are squared
Euclidean distances.
For the original PatchCore baseline, image-level scoring uses max-reduction,
$s_{\max}(x)$, to remain aligned with the original PatchCore scoring
protocol~\cite{Roth2021}.
For all \acs{MHPC} variants, image-level scoring follows the reweighted form
in Eqs.~\eqref{eq:weight}--\eqref{eq:score} with $b=9$ neighbours.
All runs use a fixed random seed.
All reported experiments are executed on \acs{CPU} only.
The public \acs{MVTecAD} experiments are run on a Linux workstation with an
Intel Core i7-9750H processor, 12 \acs{CPU} threads available to the run,
15~GiB of \acs{RAM}, 2.0~GiB of swap, Python 3.11.11, and PyTorch 2.10.0.
The industrial experiments are run on a separate Linux workstation with an
AMD Ryzen 9 9950X3D processor, 16 \acs{CPU} threads available to the run,
185~GiB of \acs{RAM}, 63~GiB of swap, Python 3.13.9, and PyTorch 2.11.0.
Both workstations include discrete NVIDIA GPUs, but GPU acceleration is not
used in the reported runs.
The industrial workstation is used because the full offline PatchCore baseline
for the industrial data exceeds the memory available on the public-benchmark
workstation.
For this reason, resource telemetry is interpreted within each result family
rather than as a cross-hardware speed comparison.

\subsection{Evaluation Metrics and Telemetry}
\label{sec:experiments:evaluation}

The primary protocol is image-level.
It is applied to each evaluation unit
\[
  \mathcal{D}_{\mathrm{ind}} =
  \{\text{Meniscus},\text{Bottom},\text{Lyo}\},
\]
\[
  d \in \mathcal{D}_{\mathrm{eval}}
  = \mathcal{C}_{\text{MVTec}} \cup
  \mathcal{D}_{\mathrm{ind}}.
\]
Let $s(x)\in\mathbb{R}$ denote the image-level anomaly score produced by the
detector and let $y(x)\in\{0,1\}$ be the corresponding image-level label.
We report \ac{AUROC} from the ranked set \(\{(s(x),y(x))\}\).
This keeps the detection comparison threshold-independent and leaves the main
emphasis on the trade-off between ranking quality, memory footprint, and
inference latency.
For \acs{MVTecAD} we report the category-wise values \(\{m_c\}_{c\in
\mathcal{C}_{\text{MVTec}}}\) together with the macro-average
\begin{equation}
  \bar m_{\text{MVTec}}=
  \frac{1}{|\mathcal{C}_{\text{MVTec}}|}
  \sum_{c\in\mathcal{C}_{\text{MVTec}}} m_c,
\label{eq:mvtec-macro}
\end{equation}
where \(|\mathcal{C}_{\text{MVTec}}|=15\).

For all evaluation units we also report resource telemetry:
\begin{equation}
  \bigl(\mathrm{RAM}_{\max},\, t_{\mathrm{fit}},\, \ell_{\mathrm{infer}}\bigr),
\end{equation}
that is, peak \acs{RAM} usage, training time, and per-image inference time.
If \(t_{\mathrm{infer}}\) is the wall-clock inference duration for an
evaluation unit with \(N_{\mathrm{test}}\) test images, then
\begin{equation}
  \ell_{\mathrm{infer}}
  = 1000\,\frac{t_{\mathrm{infer}}}{N_{\mathrm{test}}}
  \quad [\mathrm{ms/image}].
\end{equation}
The main comparison tables combine these quantities with image-level quality
because \acs{MHPC} is motivated by resource cost as well as retrieval
geometry.

Pixel-level metrics are reported only for \acs{MVTecAD}, where ground-truth masks are
available.
On \acs{MVTecAD} we report pixel-level \ac{AUROC} as a completeness metric.
For the industrial datasets, the evaluation mode is intentionally image-only,
because no spatial annotations are defined and the relevant operational question
is image-level rejection rather than pixel-level localisation.

\subsection{Industrial \acs{RAM}-Matched Subset Control}
For Meniscus we additionally study a \acs{RAM}-matched vanilla PatchCore
control.
Let
\[
  \mathcal{D}_{\mathrm{ladder}}=
  \{\text{Meniscus}\}
\]
denote the reduced-data control set.
Let
\begin{equation}
  \mathcal{N}=\{300,500,750,1000,1500,2000\}
\end{equation}
be the subset-size grid.
For each \(d\in\mathcal{D}_{\mathrm{ladder}}\) and each \(n\in\mathcal{N}\), the
control uses a fixed-seed training subset
\begin{equation}
  S_{d,n}\subset \mathcal{D}^{\mathrm{train}}_d,
  \qquad |S_{d,n}|=n,
\end{equation}
with the stated cardinality.
The reported metrics and telemetry at size \(n\) therefore correspond to the
same fixed subset ladder across all reduced-data runs.
To compare vanilla PatchCore under a \acs{RAM} budget similar to the streaming
baseline, we define the closest-\acs{RAM} operating point by
\begin{equation}
  n_d^\star=\arg\min_{n\in\mathcal{N}}
  \left|
    \mathrm{RAM}_{d,n}
    -\mathrm{RAM}^{\mathrm{stream}}_{d}
  \right|,
\label{eq:ram-match}
\end{equation}
where \(\mathrm{RAM}_{d,n}\) is the observed peak \acs{RAM} at subset size \(n\) and
\(\mathrm{RAM}^{\mathrm{stream}}_{d}\) is the peak \acs{RAM} of the streaming
\acs{MHPC} baseline on dataset \(d\).
In the results, \(n_d^\star\) identifies the closest vanilla PatchCore point
to the streaming memory budget, while the surrounding ladder is retained to
show how quality changes as the retained normal set grows.
This control is used only on Meniscus, where the central comparison concerns
image-level performance under realistic memory constraints.

\subsection{Compared Configurations}
\label{sec:experiments:configs-ablations}

The experimental design is intentionally narrow.
We compare the offline PatchCore reference with a Euclidean streaming control,
the canonical Streaming \acs{MHPC} configuration, and a
mini-batch-\(k\)-means constructor control.
Targeted studies then vary the \(k\)-center budget, evaluate the
\ac{GeoReS} variants from Section~\ref{sec:method:geores} including the
high-budget \ac{GeoReS} control, and apply the
industrial \acs{RAM}-matched subset control from Eq.~\eqref{eq:ram-match}.
The main evaluation is restricted to these contrasts so that each comparison
has a clear resource or geometric role.
Table~\ref{tab:configs} summarises the experimental contrasts.

\begin{table}[H]
  \centering
  \footnotesize
  \begin{tabular}{@{}>{\raggedright\arraybackslash}p{3.2cm}>{\raggedright\arraybackslash}p{5.2cm}>{\raggedright\arraybackslash}p{4.2cm}@{}}
    \toprule
    Configuration & Contrast in the experiment & Evaluation role \\
    \midrule
    \multicolumn{3}{@{}l}{\textit{Primary method comparisons}} \\
    Original PatchCore & Full-pool offline reference with greedy coreset selection and reference max scoring. & Non-streaming accuracy and resource baseline. \\
    \tabrowsep
    Streaming PatchCore (Euclidean) & Same streaming reducer and merge-reduce bank as \acs{MHPC}, but Euclidean retrieval. & Isolates bounded-memory construction from covariance-aware retrieval. \\
    \tabrowsep
    Streaming \acs{MHPC} & Streaming reduction, covariance whitening, merge-reduce \(k\)-center, and reweighted scoring. & Main \acs{MHPC} configuration. \\
    \tabrowsep
    Streaming \acs{MHPC} + mini-batch \(k\)-means & Same whitening pipeline with centroid summaries instead of \(k\)-center anchors. & Tests whether centroid summaries can replace coverage-oriented anchors. \\
    \tabrowsep
    \multicolumn{3}{@{}l}{\textit{\(k\)-center budget controls}} \\
    \(k\)-center large chunk & \((K,m_c)=(1000,1024)\). & Local coverage at fixed final bank size. \\
    \tabrowsep
    \(k\)-center large bank & \((K,m_c)=(5000,256)\). & Final support-set capacity at canonical local budget. \\
    \tabrowsep
    \(k\)-center high budget & \(K=5000\) and \(m_c\ge1024\) where available. & Combined local and final budget scaling. \\
    \tabrowsep
    \multicolumn{3}{@{}l}{\textit{Targeted constructor and \acs{RAM} controls}} \\
    Streaming \acs{MHPC} + \acs{GeoReS} & Two-pass residual-informed \(k\)-center, \(K=1000,\alpha=0.95\). & Residual-tail selection at the canonical bank size. \\
    \tabrowsep
    Streaming \acs{MHPC} + \acs{GeoReS}, larger bank & Two-pass residual-informed \(k\)-center, \(K=2000,\alpha=0.90\). & Residual-tail selection with more bank capacity. \\
    \tabrowsep
    Streaming \acs{MHPC} + \acs{GeoReS}, high budget & Two-pass residual-informed \(k\)-center, \(K=5000,m_c=4096,\alpha=0.90\). & High-capacity residual coverage control. \\
    \tabrowsep
    Vanilla subset sweep & Original PatchCore on fixed-seed industrial subsets \(n\in\mathcal{N}\). & \acs{RAM}-matched offline control for industrial data. \\
    \bottomrule
  \end{tabular}
  \caption{Configuration summary for the main method comparisons and targeted
  controls. The rows define the contrasts used in the reported
  accuracy--resource results.}
  \label{tab:configs}
\end{table}

The ``Streaming \acs{MHPC}'' row denotes the reference configuration
used throughout the paper.

\section{Results}
\label{sec:results}

\subsection{\acs{MVTecAD} Category-Level Benchmark Results}
\label{sec:results:main}

Table~\ref{tab:mvtec-auroc} reports the category-resolved benchmark on
\acs{MVTecAD}.
We keep the benchmark category-wise: each row is a separate one-class
problem, and the final row is the macro-average in
Eq.~\eqref{eq:mvtec-macro}.
Each cell reports the image-level value on the first line and, when available,
the corresponding pixel-level value in parentheses on the second line.
Bold values mark the best entry within each dataset row after rounding,
computed separately for the image-level and pixel-level values; ties
are bolded jointly.
Higher is better.

The offline PatchCore reference remains the strongest image-level detector on
the mean score, but the gap to Streaming \acs{MHPC} is small:
mean image \ac{AUROC} changes from \(0.991\) to \(0.989\).
The Euclidean streaming control is more clearly affected, especially on
\emph{screw}, where \ac{AUROC} drops to \(0.836\).
Because the Euclidean and Mahalanobis streaming variants share the same
bounded-memory pipeline, this gap isolates the effect of the metric geometry
after dimensionality reduction.
Replacing \(k\)-center with mini-batch \(k\)-means also reduces the mean
image-level \ac{AUROC}, which suggests that preserving support-set coverage is more
useful here than summarising the normal manifold by centroids.

At category level, the covariance-aware streaming model is not uniformly below
the offline reference: it matches or improves the rounded image-level
\ac{AUROC} on several categories, including \emph{capsule}, \emph{carpet},
\emph{grid}, \emph{pill}, \emph{wood}, and \emph{zipper}, while also tying
the near-perfect \emph{bottle}, \emph{hazelnut}, \emph{leather},
\emph{metal nut}, and \emph{toothbrush} rows after rounding.
The remaining gap is concentrated in a smaller set of harder cases, most
visibly \emph{screw}, where whitening recovers much of the Euclidean streaming
loss but does not reach the offline coreset.

The pixel-level values should be read more cautiously.
They show that the streaming geometry does not destroy localisation behaviour
on \acs{MVTecAD}: the mean pixel \ac{AUROC} changes only from \(0.980\) for the
offline reference to \(0.978\) for Streaming \acs{MHPC}.
These pixel-level values serve as public MVTec completeness rather than as the
central industrial contribution.

\begin{table}[htbp]
  \centering
  \scriptsize
  \resizebox{\textwidth}{!}{%
  \begin{tabular}{@{}l*{4}{c}@{}}
    \toprule
    Dataset &
    \shortstack{Original\\PatchCore} &
    \shortstack{Streaming\\PatchCore\\(Euclidean)} &
    \shortstack{Streaming\\Mahalanobis\\PatchCore} &
    \shortstack{Streaming\\Mahalanobis\\+ mini-batch\\$k$-means} \\
    \midrule
    Bottle & \shortstack{\textbf{1.000}\\(\textbf{0.986})} & \shortstack{\textbf{1.000}\\(0.984)} & \shortstack{\textbf{1.000}\\(0.984)} & \shortstack{\textbf{1.000}\\(0.985)} \\
    \tabrowsep
    Cable & \shortstack{\textbf{0.985}\\(\textbf{0.981})} & \shortstack{0.971\\(0.965)} & \shortstack{0.969\\(0.968)} & \shortstack{0.926\\(0.964)} \\
    \tabrowsep
    Capsule & \shortstack{0.990\\(\textbf{0.989})} & \shortstack{0.988\\(0.983)} & \shortstack{\textbf{0.994}\\(0.987)} & \shortstack{0.971\\(\textbf{0.989})} \\
    \tabrowsep
    Carpet & \shortstack{0.980\\(\textbf{0.988})} & \shortstack{0.970\\(0.986)} & \shortstack{\textbf{0.985}\\(0.987)} & \shortstack{0.983\\(\textbf{0.988})} \\
    \tabrowsep
    Grid & \shortstack{0.997\\(0.984)} & \shortstack{0.995\\(0.976)} & \shortstack{\textbf{1.000}\\(\textbf{0.985})} & \shortstack{0.998\\(0.984)} \\
    \tabrowsep
    Hazelnut & \shortstack{\textbf{1.000}\\(\textbf{0.986})} & \shortstack{\textbf{1.000}\\(0.984)} & \shortstack{\textbf{1.000}\\(0.984)} & \shortstack{\textbf{1.000}\\(0.985)} \\
    \tabrowsep
    Leather & \shortstack{\textbf{1.000}\\(0.992)} & \shortstack{\textbf{1.000}\\(\textbf{0.993})} & \shortstack{\textbf{1.000}\\(0.992)} & \shortstack{\textbf{1.000}\\(0.992)} \\
    \tabrowsep
    Metal nut & \shortstack{\textbf{1.000}\\(0.985)} & \shortstack{0.997\\(0.970)} & \shortstack{\textbf{1.000}\\(\textbf{0.988})} & \shortstack{0.999\\(\textbf{0.988})} \\
    \tabrowsep
    Pill & \shortstack{0.961\\(0.976)} & \shortstack{0.955\\(0.954)} & \shortstack{\textbf{0.966}\\(\textbf{0.982})} & \shortstack{0.958\\(0.980)} \\
    \tabrowsep
    Screw & \shortstack{\textbf{0.984}\\(\textbf{0.994})} & \shortstack{0.836\\(0.867)} & \shortstack{0.937\\(0.982)} & \shortstack{0.907\\(0.986)} \\
    \tabrowsep
    Tile & \shortstack{0.990\\(0.960)} & \shortstack{\textbf{0.992}\\(\textbf{0.962})} & \shortstack{0.991\\(0.957)} & \shortstack{0.989\\(0.956)} \\
    \tabrowsep
    Toothbrush & \shortstack{\textbf{1.000}\\(\textbf{0.987})} & \shortstack{\textbf{1.000}\\(0.984)} & \shortstack{\textbf{1.000}\\(\textbf{0.987})} & \shortstack{\textbf{1.000}\\(\textbf{0.987})} \\
    \tabrowsep
    Transistor & \shortstack{\textbf{1.000}\\(\textbf{0.955})} & \shortstack{0.999\\(0.932)} & \shortstack{0.999\\(0.943)} & \shortstack{0.985\\(0.927)} \\
    \tabrowsep
    Wood & \shortstack{0.990\\(\textbf{0.950})} & \shortstack{0.990\\(0.945)} & \shortstack{\textbf{0.995}\\(\textbf{0.950})} & \shortstack{\textbf{0.995}\\(0.948)} \\
    \tabrowsep
    Zipper & \shortstack{0.994\\(0.988)} & \shortstack{0.996\\(0.986)} & \shortstack{\textbf{0.997}\\(\textbf{0.989})} & \shortstack{0.993\\(\textbf{0.989})} \\
    \tabrowsep
    Mean & \shortstack{\textbf{0.991}\\(\textbf{0.980})} & \shortstack{0.979\\(0.965)} & \shortstack{0.989\\(0.978)} & \shortstack{0.980\\(0.977)} \\
    \bottomrule
  \end{tabular}}
  \caption{\acs{MVTecAD} \acs{AUROC}. Each cell reports image-level
  \acs{AUROC} and, in parentheses, pixel-level \acs{AUROC}.}
  \label{tab:mvtec-auroc}
\end{table}

\subsection{Industrial Image-Level Results}

Table~\ref{tab:industrial-auroc} mirrors the MVTec category-level table for
the industrial protocol.
Each row is one selected industrial inspection dataset.
Unlike the MVTec table, this table reports image-level \ac{AUROC} only, because
the industrial data used here do not provide pixel-level masks.

Across the selected industrial datasets, the offline PatchCore reference is
still the strongest image-level detector among the four main configurations,
with mean \ac{AUROC} \(0.996\).
Within the bounded-memory family, the Mahalanobis variant improves over the
Euclidean streaming control in mean \ac{AUROC} (\(0.986\) versus \(0.981\)),
while the mini-batch \(k\)-means constructor is weaker on this selected
industrial set.
This supports the central comparison: the streaming-compatible construction is
most useful when paired with a memory bank that preserves support-set coverage
and a metric geometry that reflects the covariance of the normal features.

\begin{table}[htbp]
  \centering
  \scriptsize
  \resizebox{\textwidth}{!}{%
  \begin{tabular}{@{}l*{4}{c}@{}}
    \toprule
    Dataset &
    \shortstack{Original\\PatchCore} &
    \shortstack{Streaming\\PatchCore\\(Euclidean)} &
    \shortstack{Streaming\\Mahalanobis\\PatchCore} &
    \shortstack{Streaming\\Mahalanobis\\+ mini-batch\\$k$-means} \\
    \midrule
    Meniscus & \textbf{0.992} & 0.962 & 0.974 & 0.901 \\
    \tabrowsep
    Bottom & \textbf{0.998} & 0.995 & 0.995 & 0.989 \\
    \tabrowsep
    Lyo & \textbf{0.998} & 0.985 & 0.988 & 0.982 \\
    \tabrowsep
    Mean & \textbf{0.996} & 0.981 & 0.986 & 0.957 \\
    \bottomrule
  \end{tabular}}
  \caption{Industrial image-level \acs{AUROC} on the selected industrial
  datasets. Pixel-level metrics are not reported for the industrial datasets
  because the protocol uses image-level labels only.}
  \label{tab:industrial-auroc}
\end{table}

Figures~\ref{fig:industrial-meniscus-baseline-qualitative},
\ref{fig:industrial-bottom-baseline-qualitative}, and
\ref{fig:industrial-lyo-baseline-qualitative} show qualitative examples from
the canonical streaming \acs{MHPC} baseline on the three selected industrial
datasets.
For each sample, the grids show the input \acs{ROI}, the anomaly-score heatmap
overlaid on the \acs{ROI}, and the predicted segmentation overlay.
These maps provide visual context for the selected industrial cases, while the
quantitative protocol remains image-level.

\begin{figure}[htbp]
  \centering
  \includegraphics[width=\textwidth]{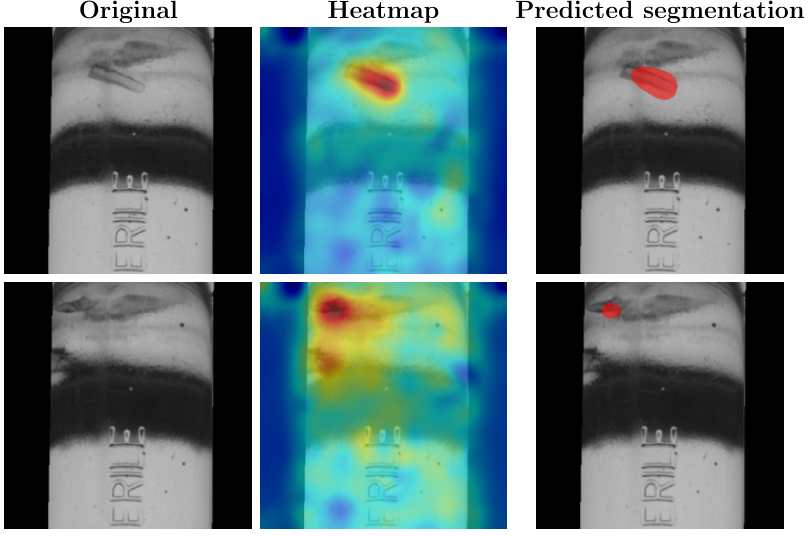}
  \caption{Meniscus qualitative examples from the canonical streaming
  \acs{MHPC} baseline. Each row shows one sample as the original image,
  anomaly-score heatmap overlay, and predicted segmentation overlay.}
  \label{fig:industrial-meniscus-baseline-qualitative}
\end{figure}

\begin{figure}[htbp]
  \centering
  \includegraphics[width=\textwidth]{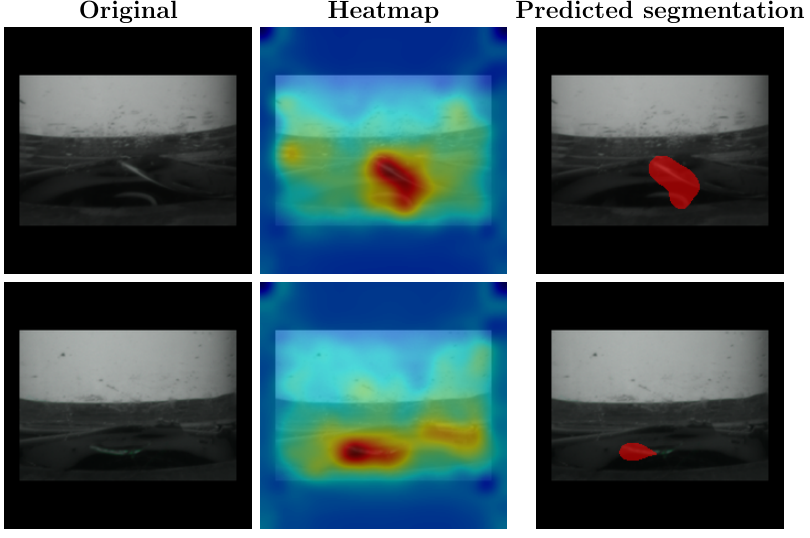}
  \caption{Bottom qualitative examples from the canonical streaming
  \acs{MHPC} baseline. Each row shows one sample as the original image,
  anomaly-score heatmap overlay, and predicted segmentation overlay.}
  \label{fig:industrial-bottom-baseline-qualitative}
\end{figure}

\begin{figure}[htbp]
  \centering
  \includegraphics[width=\textwidth]{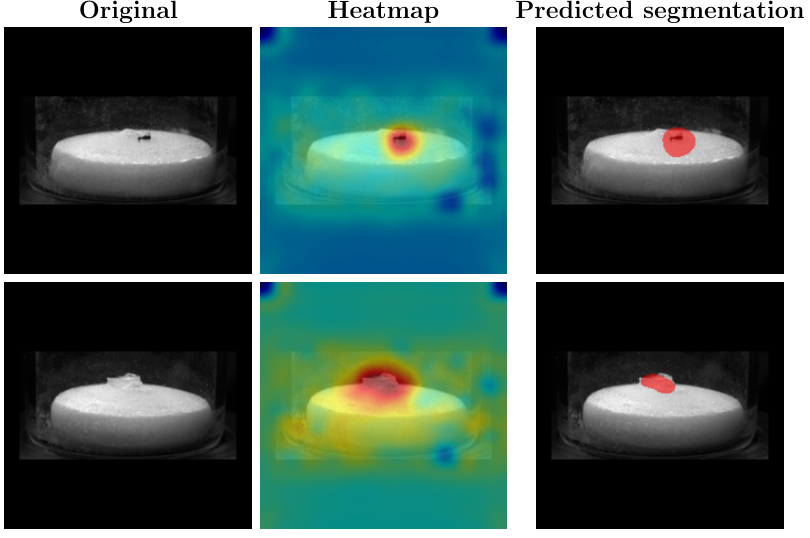}
  \caption{Lyo qualitative examples from the canonical streaming \acs{MHPC}
  baseline. Each row shows one sample as the original image, anomaly-score
  heatmap overlay, and predicted segmentation overlay.}
  \label{fig:industrial-lyo-baseline-qualitative}
\end{figure}

Table~\ref{tab:industrial-subset-ladder} gives the industrial-only
reduced-data control from Eq.~\eqref{eq:ram-match}.
It compares original PatchCore, the canonical streaming baseline, and the
\ac{GeoReS} \(K=2000,\alpha=0.90\) trade-off on the same fixed-seed Meniscus
subset ladder.
Figure~\ref{fig:industrial-ram-ladder} visualises the corresponding \acs{RAM} growth
for original PatchCore and the \ac{GeoReS} trade-off.
Original PatchCore improves as more normal images are retained, from
\ac{AUROC} \(0.958\) at \(n=300\) to \(0.988\) at \(n=2000\), but its peak
\acs{RAM} grows from \(8.34\) to \(20.46\)~GB.
The \ac{GeoReS} \(K=2000\) configuration remains in the \(8.24\)--\(9.22\)~GB
range across the same ladder and increases from \ac{AUROC} \(0.952\) to
\(0.984\).
Thus, at a near-\(9\)~GB memory scale, bounded-memory training
can use the larger normal subset, whereas original PatchCore must stay near
the smallest subset sizes.
Using Eq.~\eqref{eq:ram-match}, the closest vanilla PatchCore point to the
canonical streaming \acs{MHPC} memory budget is \(n=300\), with \ac{AUROC}
\(0.958\) at \(8.34\)~GB.
Within the adjacent neighbourhood around \(9\)~GB, original PatchCore reaches
\ac{AUROC} \(0.979\) at \(n=750\) and \(9.02\)~GB, while the \ac{GeoReS}
\(n=2000\) row reaches \ac{AUROC} \(0.984\) at \(8.24\)~GB.

\begin{table}[htbp]
  \centering
  \scriptsize
  \resizebox{\textwidth}{!}{%
  \begin{tabular}{@{}llcccc@{}}
    \toprule
    \shortstack{Normal train\\images} & Configuration & \acs{AUROC} &
    \shortstack{Max \acs{RAM}\\(GB)} & \shortstack{Fit time\\(s)} &
    \shortstack{Inference latency\\(ms/image)} \\
    \midrule
    300 & Original PatchCore & \textbf{0.958} & \textbf{8.34} & 143.1 & 839.5 \\
    \tabrowsep
    300 & Streaming Mahalanobis & 0.938 & 8.87 & \textbf{121.8} & \textbf{61.9} \\
    \tabrowsep
    300 & \acs{GeoReS} \(K=2000,\alpha=0.90\) & 0.952 & 9.13 & 654.5 & 84.4 \\
    \midrule
    500 & Original PatchCore & \textbf{0.974} & 9.17 & 377.2 & 1389.4 \\
    \tabrowsep
    500 & Streaming Mahalanobis & 0.945 & \textbf{8.00} & \textbf{202.4} & \textbf{55.8} \\
    \tabrowsep
    500 & \acs{GeoReS} \(K=2000,\alpha=0.90\) & 0.967 & 8.49 & 1060.5 & 74.6 \\
    \midrule
    750 & Original PatchCore & \textbf{0.979} & 9.02 & 863.4 & 2087.2 \\
    \tabrowsep
    750 & Streaming Mahalanobis & 0.951 & \textbf{8.99} & \textbf{304.1} & \textbf{65.2} \\
    \tabrowsep
    750 & \acs{GeoReS} \(K=2000,\alpha=0.90\) & 0.973 & 9.05 & 1682.2 & 84.9 \\
    \midrule
    1000 & Original PatchCore & \textbf{0.982} & 11.13 & 1406.7 & 2861.9 \\
    \tabrowsep
    1000 & Streaming Mahalanobis & 0.963 & \textbf{8.01} & \textbf{408.5} & \textbf{65.9} \\
    \tabrowsep
    1000 & \acs{GeoReS} \(K=2000,\alpha=0.90\) & 0.977 & 9.22 & 2161.0 & 85.3 \\
    \midrule
    1500 & Original PatchCore & \textbf{0.985} & 15.82 & 3376.1 & 3961.4 \\
    \tabrowsep
    1500 & Streaming Mahalanobis & 0.959 & \textbf{7.95} & \textbf{607.3} & \textbf{65.3} \\
    \tabrowsep
    1500 & \acs{GeoReS} \(K=2000,\alpha=0.90\) & 0.981 & 8.59 & 3135.0 & 85.6 \\
    \midrule
    2000 & Original PatchCore & \textbf{0.988} & 20.46 & 6108.9 & 5505.9 \\
    \tabrowsep
    2000 & Streaming Mahalanobis & 0.971 & 8.81 & \textbf{821.5} & \textbf{66.0} \\
    \tabrowsep
    2000 & \acs{GeoReS} \(K=2000,\alpha=0.90\) & 0.984 & \textbf{8.24} & 4469.1 & 85.5 \\
    \bottomrule
  \end{tabular}}
  \caption{Industrial Meniscus reduced-data comparison. Each block uses the same
  fixed-seed normal-training subset and compares original PatchCore with the
  canonical streaming baseline and the selected streaming memory--accuracy tradeoff
  used in the \acs{RAM}-growth plot. Peak \acs{RAM} is reported in GB, fit time in seconds,
  and inference latency in milliseconds per image; lower is better for
  telemetry.}
  \label{tab:industrial-subset-ladder}
\end{table}

\begin{figure}[htbp]
  \centering
  \includegraphics[width=0.72\textwidth]{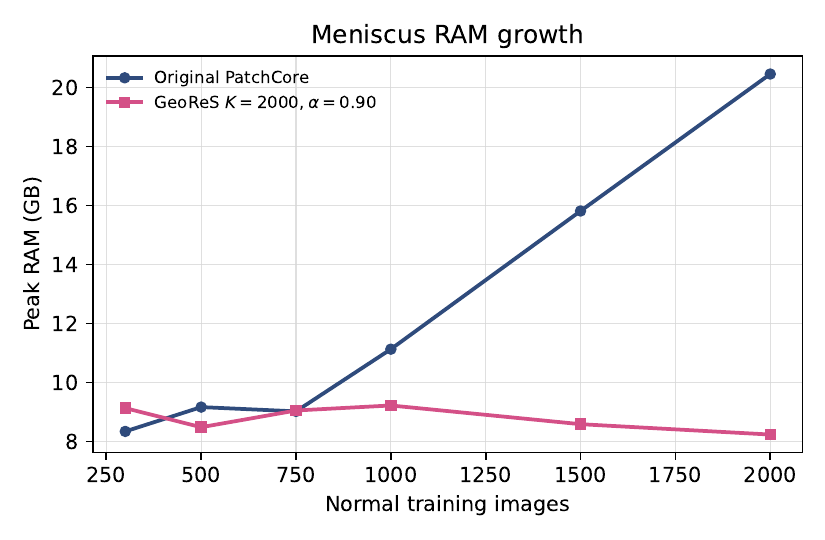}
  \caption{Meniscus \acs{RAM} growth for original PatchCore and the selected
  \acs{GeoReS} streaming trade-off. Original PatchCore improves by retaining
  more normal images, but its resident memory grows with the retained subset,
  whereas the bounded-memory construction stays near a fixed \acs{RAM} band.}
  \label{fig:industrial-ram-ladder}
\end{figure}

\clearpage

\subsection{Targeted Ablations and Budget Studies}
\label{sec:results:ablation}

\paragraph{Budget controls}
Table~\ref{tab:mvtec-kcenter-budget} isolates the two \(k\)-center budget
parameters on \acs{MVTecAD}: the final memory-bank size \(K\) and the local
merge-reduce coreset size \(m_c\).
Increasing the local chunk budget from \(m_c=256\) to \(m_c=1024\) gives a
small improvement in mean image and pixel \ac{AUROC}, but the gain is minor
relative to the additional training time.
Increasing the final bank from \(K=1000\) to \(K=5000\) does not improve the
MVTec image-level mean under the evaluated protocol, even though the joint
high-budget setting improves the reported pixel summaries.
The higher-budget configurations are still informative:
\((K,m_c)=(1000,1024)\) reaches
the offline PatchCore mean image \ac{AUROC} after rounding while keeping peak
\acs{RAM} at \(2.56\)~GB rather than the offline reference's \(5.41\)~GB.
For the paper's main baseline, however, the canonical \((K,m_c)=(1000,256)\)
setting remains the intended lightweight operating point, because it gives most
of this quality with the lowest training cost and fastest inference among the
covariance-aware \(k\)-center variants.

\begin{table}[htbp]
  \centering
  \footnotesize
  \resizebox{\textwidth}{!}{%
  \begin{tabular}{@{}l*{5}{c}|c|@{}}
    \toprule
    Measure &
    \shortstack{Streaming\\Mahalanobis\\$K=1000,m_c=256$} &
    \shortstack{Streaming\\Mahalanobis\\$K=1000,m_c=1024$} &
    \shortstack{Streaming\\Mahalanobis\\$K=5000,m_c=256$} &
    \shortstack{Streaming\\Mahalanobis\\$K=5000,m_c=1024$} &
    \shortstack{Streaming\\Mahalanobis\\$K=5000,m_c=4096$} &
    \shortstack{Original\\PatchCore} \\
    \midrule
    Image \acs{AUROC} & 0.989 & \textbf{0.991} & 0.989 & 0.989 & 0.989 & 0.991 \\
    \tabrowsep
    Pixel \acs{AUROC} & 0.978 & \textbf{0.979} & 0.978 & 0.978 & \textbf{0.979} & 0.980 \\
    \tabrowsep
    Mean training time (s) & \textbf{57.913} & 162.627 & 231.418 & 171.598 & 1330.624 & 12.444 \\
    \tabrowsep
    Mean \acs{RAM} (GB) & 2.430 & \textbf{2.424} & 2.749 & 2.446 & 3.585 & 4.125 \\
    \tabrowsep
    Max \acs{RAM} (GB) & 2.775 & \textbf{2.556} & 3.083 & 2.761 & 3.853 & 5.412 \\
    \tabrowsep
    Per-image inference time (ms/image) & \textbf{42.610} & 46.437 & 87.706 & 63.559 & 135.963 & 467.092 \\
    \bottomrule
  \end{tabular}}
  \caption{\acs{MVTecAD} \(k\)-center budget study. \acs{AUROC} is macro-averaged over
  the 15 category datasets. Mean training time is reported
  in seconds, \acs{RAM} in GB, and per-image inference time as an image-weighted
  mean. The vertically separated Original PatchCore column is an offline
  reference and is excluded from boldface selection.}
  \label{tab:mvtec-kcenter-budget}
\end{table}

Table~\ref{tab:industrial-kcenter-budget} reports the same \(k\)-center budget
study for the industrial datasets.
The table has the same role as Table~\ref{tab:mvtec-kcenter-budget}, but it
omits pixel-level columns because the industrial protocol is image-level only.
On the selected industrial datasets, the joint high-budget setting
\((K,m_c)=(5000,4096)\) gives the strongest \(k\)-center configuration,
raising mean image \ac{AUROC} from \(0.986\) to \(0.993\), while keeping peak
\acs{RAM} at \(8.55\)~GB rather than the offline PatchCore \(37.43\)~GB footprint.
This reinforces the intended interpretation of the budget ladder: the canonical
configuration is the lightweight default, whereas larger summaries expose the
accuracy that can be recovered when additional training time is acceptable.

\begin{table}[htbp]
  \centering
  \footnotesize
  \resizebox{\textwidth}{!}{%
  \begin{tabular}{@{}l*{5}{c}|c|@{}}
    \toprule
    Measure &
    \shortstack{Streaming\\Mahalanobis\\$K=1000,m_c=256$} &
    \shortstack{Streaming\\Mahalanobis\\$K=1000,m_c=1024$} &
    \shortstack{Streaming\\Mahalanobis\\$K=5000,m_c=256$} &
    \shortstack{Streaming\\Mahalanobis\\$K=5000,m_c=1024$} &
    \shortstack{Streaming\\Mahalanobis\\$K=5000,m_c=4096$} &
    \shortstack{Original\\PatchCore} \\
    \midrule
    Image \acs{AUROC} & 0.986 & 0.987 & 0.985 & 0.990 & \textbf{0.993} & 0.996 \\
    \tabrowsep
    Mean training time (s) & \textbf{1078.977} & 2697.972 & 1382.193 & 2652.466 & 12650.231 & 14837.300 \\
    \tabrowsep
    Mean \acs{RAM} (GB) & \textbf{8.001} & 8.283 & 8.595 & 8.395 & 8.064 & 31.462 \\
    \tabrowsep
    Max \acs{RAM} (GB) & \textbf{8.227} & 8.826 & 9.612 & 9.309 & 8.553 & 37.430 \\
    \tabrowsep
    Per-image inference time (ms/image) & \textbf{60.702} & 68.484 & 71.170 & 102.706 & 108.006 & 8207.638 \\
    \bottomrule
  \end{tabular}}
  \caption{Industrial \(k\)-center budget study on the selected industrial
  datasets. \acs{AUROC} is macro-averaged over the selected datasets. Mean training
  time is reported in seconds, \acs{RAM} in GB, and per-image inference
  time as an image-weighted mean. The vertically separated Original PatchCore
  column is an offline reference and is excluded from boldface selection.}
  \label{tab:industrial-kcenter-budget}
\end{table}

\paragraph{Residual-informed two-pass ablation}
Table~\ref{tab:mvtec-geores-ablation} reports the residual-informed
two-pass variants defined in Section~\ref{sec:method:geores}.
The comparison is anchored by the canonical Streaming \acs{MHPC}
configuration, so the three \ac{GeoReS} configurations should be read as targeted
ablations of the paper's baseline rather than as a separate offline comparison.
At the same final bank size as the canonical baseline, \ac{GeoReS} improves
mean image \ac{AUROC} from \(0.989\) to \(0.992\), exceeding the offline
PatchCore reference after rounding while remaining a two-pass ablation rather
than the canonical operating point.
The larger residual-informed variants mainly affect the MVTec-only pixel
summaries; the \(K=5000,m_c=4096\) setting gives the strongest reported pixel
scores but not the strongest image-level macro result.
This gain is modest and should be read together with the cost profile:
because \ac{GeoReS} must first estimate residual-heavy regions and then build
the final bank, mean training time increases from roughly \(58\)~s for the
canonical streaming baseline to about \(368\)--\(467\)~s for the lower-budget
\ac{GeoReS} variants and \(3646\)~s for the high-budget variant.
Image-weighted per-image inference time also increases, from about
\(42.6\)~ms/image to \(46.4\)--\(55.9\)~ms/image, and to \(151.7\)~ms/image
for the high-budget variant, so the extra training and inference costs remain
the main trade-offs.

\begin{table}[htbp]
  \centering
  \scriptsize
  \resizebox{\textwidth}{!}{%
  \begin{tabular}{@{}l*{4}{c}|c|@{}}
    \toprule
    Measure &
    \shortstack{Streaming\\Mahalanobis\\PatchCore} &
    \shortstack{Streaming\\Mahalanobis\\+ \acs{GeoReS}\\$K=1000,\alpha=0.95$} &
    \shortstack{Streaming\\Mahalanobis\\+ \acs{GeoReS}\\$K=2000,\alpha=0.90$} &
    \shortstack{Streaming\\Mahalanobis\\+ \acs{GeoReS}\\$K=5000,\alpha=0.90$} &
    \shortstack{Original\\PatchCore} \\
    \midrule
    Image \acs{AUROC} & 0.989 & \textbf{0.992} & 0.991 & 0.990 & 0.991 \\
    \tabrowsep
    Pixel \acs{AUROC} & 0.978 & 0.978 & 0.978 & \textbf{0.980} & 0.980 \\
    \tabrowsep
    Mean training time (s) & \textbf{57.913} & 467.414 & 368.037 & 3646.298 & 12.444 \\
    \tabrowsep
    Per-image inference time (ms/image) & \textbf{42.610} & 46.367 & 55.881 & 151.738 & 467.092 \\
    \tabrowsep
    Max \acs{RAM} (GB) & 2.775 & 2.712 & \textbf{2.622} & 4.087 & 5.412 \\
    \bottomrule
  \end{tabular}}
  \caption{\acs{MVTecAD} ablation of two-pass geometric residual selection.
  \acs{AUROC} is macro-averaged over the 15 category datasets.
  Pixel-level \acs{AUROC} is included only for \acs{MVTecAD} localisation completeness.
  \acs{RAM} is reported as peak resident memory in GB, mean training time in seconds,
  and per-image inference time as an image-weighted mean. Lower is better for
  \acs{RAM} and time. The vertically separated Original PatchCore column is an
  offline reference and is excluded from boldface selection.}
  \label{tab:mvtec-geores-ablation}
\end{table}

Table~\ref{tab:industrial-geores-ablation} gives the corresponding industrial
\ac{GeoReS} ablation.
Across the selected industrial datasets, the \(K=2000,\alpha=0.90\)
residual-informed variant improves mean image \ac{AUROC} from \(0.986\) to
\(0.991\), while also increasing training and inference time.
The high-budget \(K=5000,m_c=4096\) configuration reaches the strongest mean
\ac{AUROC}, \(0.994\), but increases mean training time to about
\(24945\)~s.
Together with the \acs{MVTecAD} result, where \ac{GeoReS} changes the macro image
\ac{AUROC} only from \(0.989\) to \(0.992\), this suggests that residual-tail
selection is most useful when the normal support contains variable modes or
locally under-represented nominal appearance regions.
In the current industrial evidence, this interpretation is consistent with the
three selected datasets: Meniscus contains meniscus and fluid variability,
Bottom contains bottom-region variability in viscous-liquid ampoules, and Lyo
contains product-appearance anomalies such as visible foreign particles, missing
cake, and incorrect cake volume. Across these cases, the relevant defect
evidence can be small, localised, or subtle, while the corresponding nominal
appearance modes may be sparsely sampled; this makes additional
residual-guided coverage useful at image level.

\begin{table}[htbp]
  \centering
  \scriptsize
  \resizebox{\textwidth}{!}{%
  \begin{tabular}{@{}l*{4}{c}|c|@{}}
    \toprule
    Measure &
    \shortstack{Streaming\\Mahalanobis\\PatchCore} &
    \shortstack{Streaming\\Mahalanobis\\+ \acs{GeoReS}\\$K=1000,\alpha=0.95$} &
    \shortstack{Streaming\\Mahalanobis\\+ \acs{GeoReS}\\$K=2000,\alpha=0.90$} &
    \shortstack{Streaming\\Mahalanobis\\+ \acs{GeoReS}\\$K=5000,\alpha=0.90$} &
    \shortstack{Original\\PatchCore} \\
    \midrule
    Image \acs{AUROC} & 0.986 & 0.988 & 0.991 & \textbf{0.994} & 0.996 \\
    \tabrowsep
    Mean training time (s) & \textbf{1078.977} & 7027.116 & 5332.320 & 24945.293 & 14837.300 \\
    \tabrowsep
    Per-image inference time (ms/image) & \textbf{60.702} & 68.344 & 78.489 & 106.992 & 8207.638 \\
    \tabrowsep
    Max \acs{RAM} (GB) & \textbf{8.227} & 9.569 & 9.332 & 9.383 & 37.430 \\
    \bottomrule
  \end{tabular}}
  \caption{Industrial ablation of two-pass geometric residual selection on the
  selected industrial datasets. \acs{AUROC} is macro-averaged over the selected
  datasets. \acs{RAM} is reported as peak resident memory in GB, mean
  training time in seconds, and per-image inference time as an image-weighted
  mean. Lower is better for \acs{RAM} and time. The vertically separated Original
  PatchCore column is an offline reference and is excluded from boldface
  selection.}
  \label{tab:industrial-geores-ablation}
\end{table}

\clearpage

\paragraph{Dataset-resolved ablation views}
The category-resolved \acs{MVTecAD} \ac{AUROC} views are reported in
Tables~\ref{tab:mvtec-kcenter-budget-auroc}
and~\ref{tab:mvtec-geores-ablation-auroc}.
The corresponding industrial dataset-resolved views are reported in
Tables~\ref{tab:industrial-kcenter-budget-auroc}
and~\ref{tab:industrial-geores-ablation-auroc}.
The \acs{MVTecAD} tables report image-level \ac{AUROC} with pixel-level
\ac{AUROC} in parentheses, because pixel masks are available for that
benchmark.
The industrial tables remain image-level only and include Original PatchCore
as a vertically separated offline reference; boldface still selects among the
streaming variants, so the ablation comparison remains anchored by the
canonical streaming baseline.

These dataset-resolved views refine the macro story rather than changing it.
For the \(k\)-center budget study, larger local summaries help some difficult
categories, for example increasing image \ac{AUROC} on \emph{screw} from
\(0.937\) to \(0.964\) when \(m_c\) rises from \(256\) to \(1024\) at
\(K=1000\).
The effect is not monotone across categories, because the high-budget setting
also reduces some rows, such as \emph{pill}.
This is not a contradiction in the budget study: these settings are not nested
models. Changing \(K\) and \(m_c\) changes the streaming coreset construction
itself, so a larger bank need not preserve the same image-level ranking,
especially on near-ceiling MVTec categories where additional normal coverage
can also reduce anomaly distances.
\ac{GeoReS} shows a similar pattern: the larger residual-informed variant
improves \emph{screw} from the canonical baseline \(0.937\) to \(0.981\), but
the macro-average changes only from \(0.989\) to \(0.992\) at best.
Thus the category tables support a conservative interpretation: extra coverage
or residual-tail selection can improve particular categories, but the canonical
streaming memory bank is already close to saturated on the aggregate MVTec
benchmark.
The industrial rows show a stronger budget and residual-tail signal.
The \(k\)-center budget sweep raises mean image \ac{AUROC} from \(0.986\) for
the canonical streaming baseline to \(0.993\) at \(K=5000,m_c=4096\), and the
\ac{GeoReS} sweep reaches \(0.994\) at the largest evaluated setting.
Most of that visible room comes from Meniscus and Lyo; Bottom is already near
saturated for the canonical streaming baseline.

\begin{table}[htbp]
  \centering
  \scriptsize
  \resizebox{\textwidth}{!}{%
  \begin{tabular}{@{}l*{5}{c}|c|@{}}
    \toprule
    Dataset &
    \shortstack{Streaming\\Mahalanobis\\PatchCore\\$K=1000,m_c=256$} &
    \shortstack{Streaming\\Mahalanobis\\$K=1000,m_c=1024$} &
    \shortstack{Streaming\\Mahalanobis\\$K=5000,m_c=256$} &
    \shortstack{Streaming\\Mahalanobis\\$K=5000,m_c=1024$} &
    \shortstack{Streaming\\Mahalanobis\\$K=5000,m_c=4096$} &
    \shortstack{Original\\PatchCore} \\
    \midrule
    Bottle & \shortstack{\textbf{1.000}\\(0.984)} & \shortstack{\textbf{1.000}\\(0.984)} & \shortstack{\textbf{1.000}\\(0.984)} & \shortstack{0.999\\(\textbf{0.985})} & \shortstack{\textbf{1.000}\\(\textbf{0.985})} & \shortstack{1.000\\(0.986)} \\
    \tabrowsep
    Cable & \shortstack{0.969\\(0.968)} & \shortstack{\textbf{0.970}\\(0.968)} & \shortstack{0.967\\(0.969)} & \shortstack{0.969\\(0.969)} & \shortstack{0.961\\(\textbf{0.972})} & \shortstack{0.985\\(0.981)} \\
    \tabrowsep
    Capsule & \shortstack{\textbf{0.994}\\(0.987)} & \shortstack{\textbf{0.994}\\(\textbf{0.989})} & \shortstack{\textbf{0.994}\\(0.987)} & \shortstack{0.992\\(\textbf{0.989})} & \shortstack{0.990\\(\textbf{0.989})} & \shortstack{0.990\\(0.989)} \\
    \tabrowsep
    Carpet & \shortstack{\textbf{0.985}\\(0.987)} & \shortstack{0.983\\(0.987)} & \shortstack{0.984\\(0.987)} & \shortstack{0.984\\(\textbf{0.988})} & \shortstack{0.983\\(\textbf{0.988})} & \shortstack{0.980\\(0.988)} \\
    \tabrowsep
    Grid & \shortstack{\textbf{1.000}\\(0.985)} & \shortstack{\textbf{1.000}\\(\textbf{0.986})} & \shortstack{\textbf{1.000}\\(0.984)} & \shortstack{0.999\\(0.984)} & \shortstack{\textbf{1.000}\\(\textbf{0.986})} & \shortstack{0.997\\(0.984)} \\
    \tabrowsep
    Hazelnut & \shortstack{\textbf{1.000}\\(0.984)} & \shortstack{\textbf{1.000}\\(0.985)} & \shortstack{\textbf{1.000}\\(0.984)} & \shortstack{\textbf{1.000}\\(0.984)} & \shortstack{\textbf{1.000}\\(\textbf{0.986})} & \shortstack{1.000\\(0.986)} \\
    \tabrowsep
    Leather & \shortstack{\textbf{1.000}\\(\textbf{0.992})} & \shortstack{\textbf{1.000}\\(\textbf{0.992})} & \shortstack{\textbf{1.000}\\(\textbf{0.992})} & \shortstack{\textbf{1.000}\\(\textbf{0.992})} & \shortstack{\textbf{1.000}\\(\textbf{0.992})} & \shortstack{1.000\\(0.992)} \\
    \tabrowsep
    Metal nut & \shortstack{\textbf{1.000}\\(\textbf{0.988})} & \shortstack{\textbf{1.000}\\(\textbf{0.988})} & \shortstack{\textbf{1.000}\\(\textbf{0.988})} & \shortstack{\textbf{1.000}\\(\textbf{0.988})} & \shortstack{\textbf{1.000}\\(\textbf{0.988})} & \shortstack{1.000\\(0.985)} \\
    \tabrowsep
    Pill & \shortstack{0.966\\(\textbf{0.982})} & \shortstack{\textbf{0.981}\\(\textbf{0.982})} & \shortstack{0.966\\(\textbf{0.982})} & \shortstack{0.951\\(0.976)} & \shortstack{0.963\\(0.977)} & \shortstack{0.961\\(0.976)} \\
    \tabrowsep
    Screw & \shortstack{0.937\\(0.982)} & \shortstack{0.964\\(0.989)} & \shortstack{0.937\\(0.982)} & \shortstack{0.966\\(0.990)} & \shortstack{\textbf{0.974}\\(\textbf{0.993})} & \shortstack{0.984\\(0.994)} \\
    \tabrowsep
    Tile & \shortstack{0.991\\(\textbf{0.957})} & \shortstack{\textbf{0.992}\\(0.956)} & \shortstack{0.991\\(\textbf{0.957})} & \shortstack{0.991\\(\textbf{0.957})} & \shortstack{0.990\\(0.956)} & \shortstack{0.990\\(0.960)} \\
    \tabrowsep
    Toothbrush & \shortstack{\textbf{1.000}\\(0.987)} & \shortstack{\textbf{1.000}\\(0.987)} & \shortstack{\textbf{1.000}\\(0.987)} & \shortstack{\textbf{1.000}\\(0.987)} & \shortstack{\textbf{1.000}\\(\textbf{0.988})} & \shortstack{1.000\\(0.987)} \\
    \tabrowsep
    Transistor & \shortstack{\textbf{0.999}\\(0.943)} & \shortstack{0.998\\(\textbf{0.950})} & \shortstack{\textbf{0.999}\\(0.943)} & \shortstack{0.996\\(0.939)} & \shortstack{0.994\\(\textbf{0.950})} & \shortstack{1.000\\(0.955)} \\
    \tabrowsep
    Wood & \shortstack{\textbf{0.995}\\(\textbf{0.950})} & \shortstack{0.993\\(\textbf{0.950})} & \shortstack{\textbf{0.995}\\(\textbf{0.950})} & \shortstack{0.993\\(\textbf{0.950})} & \shortstack{0.991\\(0.949)} & \shortstack{0.990\\(0.950)} \\
    \tabrowsep
    Zipper & \shortstack{0.997\\(\textbf{0.989})} & \shortstack{\textbf{0.998}\\(\textbf{0.989})} & \shortstack{0.997\\(\textbf{0.989})} & \shortstack{0.997\\(\textbf{0.989})} & \shortstack{0.995\\(\textbf{0.989})} & \shortstack{0.994\\(0.988)} \\
    \tabrowsep
    Mean & \shortstack{0.989\\(0.978)} & \shortstack{\textbf{0.991}\\(\textbf{0.979})} & \shortstack{0.989\\(0.978)} & \shortstack{0.989\\(0.978)} & \shortstack{0.989\\(\textbf{0.979})} & \shortstack{0.991\\(0.980)} \\
    \bottomrule
  \end{tabular}}
  \caption{\acs{MVTecAD} category-wise \(k\)-center budget ablation in \acs{AUROC}. The comparison is anchored by the canonical streaming baseline. Each cell reports image-level \acs{AUROC} and, in parentheses, pixel-level \acs{AUROC}. The vertically separated Original PatchCore column is an offline reference and is excluded from boldface selection.}
  \label{tab:mvtec-kcenter-budget-auroc}
\end{table}

\begin{table}[htbp]
  \centering
  \scriptsize
  \resizebox{\textwidth}{!}{%
  \begin{tabular}{@{}l*{4}{c}|c|@{}}
    \toprule
    Dataset &
    \shortstack{Streaming\\Mahalanobis\\PatchCore} &
    \shortstack{Streaming\\Mahalanobis\\+ \acs{GeoReS}\\$K=1000,\alpha=0.95$} &
    \shortstack{Streaming\\Mahalanobis\\+ \acs{GeoReS}\\$K=2000,\alpha=0.90$} &
    \shortstack{Streaming\\Mahalanobis\\+ \acs{GeoReS}\\$K=5000,\alpha=0.90$} &
    \shortstack{Original\\PatchCore} \\
    \midrule
    Bottle & \shortstack{\textbf{1.000}\\(0.984)} & \shortstack{\textbf{1.000}\\(\textbf{0.985})} & \shortstack{0.999\\(\textbf{0.985})} & \shortstack{\textbf{1.000}\\(\textbf{0.985})} & \shortstack{1.000\\(0.986)} \\
    \tabrowsep
    Cable & \shortstack{0.969\\(0.968)} & \shortstack{0.975\\(0.969)} & \shortstack{\textbf{0.976}\\(0.968)} & \shortstack{0.964\\(\textbf{0.972})} & \shortstack{0.985\\(0.981)} \\
    \tabrowsep
    Capsule & \shortstack{0.994\\(0.987)} & \shortstack{\textbf{0.995}\\(\textbf{0.989})} & \shortstack{0.993\\(\textbf{0.989})} & \shortstack{0.990\\(\textbf{0.989})} & \shortstack{0.990\\(0.989)} \\
    \tabrowsep
    Carpet & \shortstack{\textbf{0.985}\\(0.987)} & \shortstack{0.983\\(0.987)} & \shortstack{0.983\\(\textbf{0.988})} & \shortstack{0.983\\(\textbf{0.988})} & \shortstack{0.980\\(0.988)} \\
    \tabrowsep
    Grid & \shortstack{\textbf{1.000}\\(0.985)} & \shortstack{\textbf{1.000}\\(0.985)} & \shortstack{0.999\\(0.985)} & \shortstack{\textbf{1.000}\\(\textbf{0.986})} & \shortstack{0.997\\(0.984)} \\
    \tabrowsep
    Hazelnut & \shortstack{\textbf{1.000}\\(0.984)} & \shortstack{\textbf{1.000}\\(0.985)} & \shortstack{\textbf{1.000}\\(0.985)} & \shortstack{\textbf{1.000}\\(\textbf{0.986})} & \shortstack{1.000\\(0.986)} \\
    \tabrowsep
    Leather & \shortstack{\textbf{1.000}\\(\textbf{0.992})} & \shortstack{\textbf{1.000}\\(\textbf{0.992})} & \shortstack{\textbf{1.000}\\(\textbf{0.992})} & \shortstack{\textbf{1.000}\\(\textbf{0.992})} & \shortstack{1.000\\(0.992)} \\
    \tabrowsep
    Metal nut & \shortstack{\textbf{1.000}\\(0.988)} & \shortstack{\textbf{1.000}\\(0.988)} & \shortstack{\textbf{1.000}\\(\textbf{0.989})} & \shortstack{\textbf{1.000}\\(0.988)} & \shortstack{1.000\\(0.985)} \\
    \tabrowsep
    Pill & \shortstack{0.966\\(\textbf{0.982})} & \shortstack{\textbf{0.978}\\(0.981)} & \shortstack{0.959\\(0.976)} & \shortstack{0.968\\(0.981)} & \shortstack{0.961\\(0.976)} \\
    \tabrowsep
    Screw & \shortstack{0.937\\(0.982)} & \shortstack{0.975\\(0.987)} & \shortstack{0.978\\(0.991)} & \shortstack{\textbf{0.981}\\(\textbf{0.993})} & \shortstack{0.984\\(0.994)} \\
    \tabrowsep
    Tile & \shortstack{\textbf{0.991}\\(\textbf{0.957})} & \shortstack{\textbf{0.991}\\(0.956)} & \shortstack{\textbf{0.991}\\(\textbf{0.957})} & \shortstack{\textbf{0.991}\\(\textbf{0.957})} & \shortstack{0.990\\(0.960)} \\
    \tabrowsep
    Toothbrush & \shortstack{\textbf{1.000}\\(0.987)} & \shortstack{\textbf{1.000}\\(0.987)} & \shortstack{\textbf{1.000}\\(0.987)} & \shortstack{\textbf{1.000}\\(\textbf{0.988})} & \shortstack{1.000\\(0.987)} \\
    \tabrowsep
    Transistor & \shortstack{\textbf{0.999}\\(0.943)} & \shortstack{0.995\\(0.943)} & \shortstack{0.997\\(0.945)} & \shortstack{0.994\\(\textbf{0.949})} & \shortstack{1.000\\(0.955)} \\
    \tabrowsep
    Wood & \shortstack{\textbf{0.995}\\(0.950)} & \shortstack{0.993\\(0.950)} & \shortstack{0.992\\(\textbf{0.951})} & \shortstack{0.991\\(0.949)} & \shortstack{0.990\\(0.950)} \\
    \tabrowsep
    Zipper & \shortstack{0.997\\(\textbf{0.989})} & \shortstack{0.997\\(\textbf{0.989})} & \shortstack{\textbf{0.998}\\(\textbf{0.989})} & \shortstack{0.995\\(\textbf{0.989})} & \shortstack{0.994\\(0.988)} \\
    \tabrowsep
    Mean & \shortstack{0.989\\(0.978)} & \shortstack{\textbf{0.992}\\(0.978)} & \shortstack{0.991\\(0.978)} & \shortstack{0.990\\(\textbf{0.980})} & \shortstack{0.991\\(0.980)} \\
    \bottomrule
  \end{tabular}}
  \caption{\acs{MVTecAD} category-wise \acs{GeoReS} ablation in \acs{AUROC}. Each cell reports image-level \acs{AUROC} and, in parentheses, pixel-level \acs{AUROC}. The vertically separated Original PatchCore column is an offline reference and is excluded from boldface selection.}
  \label{tab:mvtec-geores-ablation-auroc}
\end{table}

\begin{table}[htbp]
  \centering
  \scriptsize
  \resizebox{\textwidth}{!}{%
  \begin{tabular}{@{}l*{5}{c}|c|@{}}
    \toprule
    Dataset &
    \shortstack{Streaming\\Mahalanobis\\PatchCore\\$K=1000,m_c=256$} &
    \shortstack{Streaming\\Mahalanobis\\$K=1000,m_c=1024$} &
    \shortstack{Streaming\\Mahalanobis\\$K=5000,m_c=256$} &
    \shortstack{Streaming\\Mahalanobis\\$K=5000,m_c=1024$} &
    \shortstack{Streaming\\Mahalanobis\\$K=5000,m_c=4096$} &
    \shortstack{Original\\PatchCore} \\
    \midrule
    Meniscus & 0.974 & 0.976 & 0.975 & 0.982 & \textbf{0.988} & 0.992 \\
    \tabrowsep
    Bottom & 0.995 & 0.997 & 0.995 & 0.997 & \textbf{0.998} & 0.998 \\
    \tabrowsep
    Lyo & 0.988 & 0.987 & 0.984 & 0.989 & \textbf{0.994} & 0.998 \\
    \tabrowsep
    Mean & 0.986 & 0.987 & 0.985 & 0.990 & \textbf{0.993} & 0.996 \\
    \bottomrule
  \end{tabular}}
  \caption{Industrial dataset-wise \(k\)-center budget ablation in \acs{AUROC}. The comparison is anchored by the canonical streaming baseline. Values are image-level \acs{AUROC}. The vertically separated Original PatchCore column is an offline reference and is excluded from boldface selection.}
  \label{tab:industrial-kcenter-budget-auroc}
\end{table}

\begin{table}[htbp]
  \centering
  \scriptsize
  \resizebox{\textwidth}{!}{%
  \begin{tabular}{@{}l*{4}{c}|c|@{}}
    \toprule
    Dataset &
    \shortstack{Streaming\\Mahalanobis\\PatchCore} &
    \shortstack{Streaming\\Mahalanobis\\+ \acs{GeoReS}\\$K=1000,\alpha=0.95$} &
    \shortstack{Streaming\\Mahalanobis\\+ \acs{GeoReS}\\$K=2000,\alpha=0.90$} &
    \shortstack{Streaming\\Mahalanobis\\+ \acs{GeoReS}\\$K=5000,\alpha=0.90$} &
    \shortstack{Original\\PatchCore} \\
    \midrule
    Meniscus & 0.974 & 0.982 & 0.988 & \textbf{0.990} & 0.992 \\
    \tabrowsep
    Bottom & 0.995 & 0.996 & 0.997 & \textbf{0.998} & 0.998 \\
    \tabrowsep
    Lyo & 0.988 & 0.986 & 0.987 & \textbf{0.995} & 0.998 \\
    \tabrowsep
    Mean & 0.986 & 0.988 & 0.991 & \textbf{0.994} & 0.996 \\
    \bottomrule
  \end{tabular}}
  \caption{Industrial dataset-wise \acs{GeoReS} ablation in \acs{AUROC}. Values are image-level \acs{AUROC}. The vertically separated Original PatchCore column is an offline reference and is excluded from boldface selection.}
  \label{tab:industrial-geores-ablation-auroc}
\end{table}

\clearpage

\subsection{Resource Footprint and Runtime}
\label{sec:results:memory}

Table~\ref{tab:mvtec-resource} isolates the efficiency profile of the same four
configurations used in Table~\ref{tab:mvtec-auroc}.
The accuracy ranking is established in that earlier benchmark table; here the
global summary is limited to mean training time, image-weighted per-image
inference time, and resident memory.
Because \acs{MVTecAD} is a collection of separate category datasets, a single
efficiency aggregate can still hide which categories drive the operational
cost.
Tables~\ref{tab:mvtec-ram} and~\ref{tab:mvtec-inference-per-image-time}
therefore report the corresponding category-resolved peak resident memory and
per-image inference time, with an image-weighted mean summary.
On \acs{MVTecAD}, the bounded-memory variants pay additional fit-time overhead even
as they reduce \acs{RAM}, because the benchmark is still small enough that the
offline reference remains the fastest configuration to build.
The offline reference defines the highest-memory endpoint on this benchmark,
with a \(5.41\)~GB peak.
Streaming \acs{MHPC} reduces that peak to \(2.78\)~GB while staying
in the same operational regime as the other bounded-memory variants.
The Euclidean streaming control remains the lightest at inference, whereas the
covariance-aware baseline carries a moderate latency premium.
The slower Original PatchCore inference is consistent with querying a larger
retained patch support: the offline reference keeps the largest memory
footprint in this table, whereas the streaming variants cap the candidate bank
before nearest-neighbour lookup.
This subsection therefore quantifies the runtime and memory cost of the
preferred streaming configuration rather than restating the quality ranking
established above.

\begin{table}[htbp]
  \centering
  \scriptsize
  \resizebox{\textwidth}{!}{%
  \begin{tabular}{@{}l*{4}{c}@{}}
    \toprule
    Measure &
    \shortstack{Original\\PatchCore} &
    \shortstack{Streaming\\PatchCore\\(Euclidean)} &
    \shortstack{Streaming\\Mahalanobis\\PatchCore} &
    \shortstack{Streaming\\Mahalanobis\\+ mini-batch\\$k$-means} \\
    \midrule
    Mean training time (s) & \textbf{12.444} & 54.822 & 57.913 & 24.358 \\
    \tabrowsep
    Per-image inference time (ms/image) & 467.092 & \textbf{39.592} & 42.610 & 46.623 \\
    \tabrowsep
    Mean \acs{RAM} (GB) & 4.125 & \textbf{2.424} & 2.430 & 2.489 \\
    \tabrowsep
    Max \acs{RAM} (GB) & 5.412 & \textbf{2.611} & 2.775 & 2.768 \\
    \bottomrule
  \end{tabular}}
  \caption{\acs{MVTecAD} efficiency summary for the four primary methods.
  Mean training time and mean resident memory are macro-averaged over the 15
  category datasets, max \acs{RAM} reports the largest category peak, and per-image
  inference time is reported as an image-weighted mean. Lower is better for all
  rows.}
  \label{tab:mvtec-resource}
\end{table}

\begin{table}[htbp]
  \centering
  \scriptsize
  \resizebox{\textwidth}{!}{%
  \begin{tabular}{@{}l*{4}{c}@{}}
    \toprule
    Dataset &
    \shortstack{Original\\PatchCore} &
    \shortstack{Streaming\\PatchCore\\(Euclidean)} &
    \shortstack{Streaming\\Mahalanobis\\PatchCore} &
    \shortstack{Streaming\\Mahalanobis\\+ mini-batch\\$k$-means} \\
    \midrule
    Bottle & 3.520 & 2.164 & \textbf{2.019} & 2.060 \\
    \tabrowsep
    Cable & 3.741 & \textbf{2.289} & 2.517 & 2.372 \\
    \tabrowsep
    Capsule & 3.825 & 2.393 & \textbf{2.361} & 2.515 \\
    \tabrowsep
    Carpet & 4.428 & \textbf{2.425} & 2.536 & 2.448 \\
    \tabrowsep
    Grid & 4.304 & 2.381 & \textbf{2.376} & 2.448 \\
    \tabrowsep
    Hazelnut & 5.412 & 2.408 & \textbf{2.337} & 2.405 \\
    \tabrowsep
    Leather & 4.158 & 2.457 & \textbf{2.436} & 2.503 \\
    \tabrowsep
    Metal nut & 3.942 & \textbf{2.472} & 2.483 & 2.520 \\
    \tabrowsep
    Pill & 4.414 & 2.469 & \textbf{2.437} & 2.525 \\
    \tabrowsep
    Screw & 5.127 & \textbf{2.611} & 2.775 & 2.768 \\
    \tabrowsep
    Tile & 4.252 & 2.503 & \textbf{2.492} & 2.675 \\
    \tabrowsep
    Toothbrush & 2.557 & 2.459 & \textbf{2.405} & 2.531 \\
    \tabrowsep
    Transistor & 3.844 & \textbf{2.340} & 2.405 & 2.495 \\
    \tabrowsep
    Wood & 4.198 & 2.443 & \textbf{2.426} & 2.543 \\
    \tabrowsep
    Zipper & 4.149 & 2.541 & \textbf{2.446} & 2.531 \\
    \tabrowsep
    Mean & 4.125 & \textbf{2.424} & 2.430 & 2.489 \\
    \tabrowsep
    Max & 5.412 & \textbf{2.611} & 2.775 & 2.768 \\
    \bottomrule
  \end{tabular}}
  \caption{\acs{MVTecAD} per-category peak resident memory. The final rows report
  the mean and maximum of the category-level peaks. Values are reported in GB;
  lower is better.}
  \label{tab:mvtec-ram}
\end{table}

\begin{table}[htbp]
  \centering
  \scriptsize
  \resizebox{\textwidth}{!}{%
  \begin{tabular}{@{}l*{4}{c}@{}}
    \toprule
    Dataset &
    \shortstack{Original\\PatchCore} &
    \shortstack{Streaming\\PatchCore\\(Euclidean)} &
    \shortstack{Streaming\\Mahalanobis\\PatchCore} &
    \shortstack{Streaming\\Mahalanobis\\+ mini-batch\\$k$-means} \\
    \midrule
    Bottle & 401.656 & \textbf{44.991} & 46.320 & 48.877 \\
    \tabrowsep
    Cable & 417.472 & \textbf{39.743} & 42.693 & 47.146 \\
    \tabrowsep
    Capsule & 408.882 & \textbf{36.563} & 39.329 & 41.677 \\
    \tabrowsep
    Carpet & 516.694 & \textbf{39.022} & 41.937 & 46.754 \\
    \tabrowsep
    Grid & 495.392 & \textbf{44.855} & 47.268 & 51.272 \\
    \tabrowsep
    Hazelnut & 711.642 & \textbf{44.129} & 48.588 & 51.903 \\
    \tabrowsep
    Leather & 451.504 & \textbf{34.042} & 36.994 & 45.095 \\
    \tabrowsep
    Metal nut & 408.750 & \textbf{39.849} & 43.035 & 45.787 \\
    \tabrowsep
    Pill & 485.295 & \textbf{30.986} & 34.048 & 39.871 \\
    \tabrowsep
    Screw & 578.678 & \textbf{31.495} & 35.025 & 40.908 \\
    \tabrowsep
    Tile & 428.423 & \textbf{44.118} & 47.742 & 47.159 \\
    \tabrowsep
    Toothbrush & 178.059 & \textbf{76.339} & 78.436 & 86.423 \\
    \tabrowsep
    Transistor & 406.183 & \textbf{46.204} & 49.858 & 53.245 \\
    \tabrowsep
    Wood & 471.315 & \textbf{48.865} & 52.062 & 60.711 \\
    \tabrowsep
    Zipper & 439.381 & \textbf{33.038} & 35.227 & 35.010 \\
    \tabrowsep
    Image-weighted mean & 467.092 & \textbf{39.592} & 42.610 & 46.623 \\
    \bottomrule
  \end{tabular}}
  \caption{\acs{MVTecAD} per-category per-image inference time. The final row
  reports the image-weighted mean across all test images. Values are reported
  in ms/image; lower is better.}
  \label{tab:mvtec-inference-per-image-time}
\end{table}

Tables~\ref{tab:industrial-resource}, \ref{tab:industrial-ram},
and~\ref{tab:industrial-inference-per-image-time} are the industrial
counterparts of the same efficiency views.
As in the earlier industrial image-level results, they should be read as
telemetry summaries rather than as a second ranking of image metrics.
The industrial comparison makes the memory contrast sharper: the offline
reference reaches \(37.43\)~GB peak \acs{RAM}, whereas the canonical
bounded-memory alternative remains at \(8.23\)~GB.
In this larger setting, the same bounded-memory construction also reduces mean
training time relative to the offline reference rather than increasing it
(\(1079\)~s versus \(14837\)~s).
Within that bounded-memory regime, the covariance-aware streaming baseline and
the Euclidean control stay close in memory while differing mainly in inference
cost and the earlier benchmark quality ranking.
Figure~\ref{fig:industrial-auroc-ram-pareto} summarises the same trade-off for
the primary methods, \(k\)-center budget configurations, and \ac{GeoReS}
ablations.

\begin{table}[htbp]
  \centering
  \scriptsize
  \resizebox{\textwidth}{!}{%
  \begin{tabular}{@{}lcccc@{}}
    \toprule
    Measure &
    \shortstack{Original\\PatchCore} &
    \shortstack{Streaming\\PatchCore\\(Euclidean)} &
    \shortstack{Streaming\\Mahalanobis\\PatchCore} &
    \shortstack{Streaming\\Mahalanobis\\+ mini-batch\\$k$-means} \\
    \midrule
    Mean training time (s) & 14837.300 & 939.029 & 1078.977 & \textbf{607.680} \\
    \tabrowsep
    Per-image inference time (ms/image) & 8207.638 & 64.501 & \textbf{60.702} & 68.824 \\
    \tabrowsep
    Mean \acs{RAM} (GB) & 31.462 & 8.845 & \textbf{8.001} & 8.571 \\
    \tabrowsep
    Max \acs{RAM} (GB) & 37.430 & 9.656 & \textbf{8.227} & 9.052 \\
    \bottomrule
  \end{tabular}}
  \caption{Industrial efficiency summary for the four primary methods on the
  selected industrial datasets.
  Mean training time and mean resident memory are macro-averaged over the
  industrial datasets, max \acs{RAM} reports the largest dataset peak, and per-image
  inference time is reported as an image-weighted mean. Lower is better for all
  rows.}
  \label{tab:industrial-resource}
\end{table}

\begin{table}[htbp]
  \centering
  \scriptsize
  \resizebox{\textwidth}{!}{%
  \begin{tabular}{@{}lcccc@{}}
    \toprule
    Dataset &
    \shortstack{Original\\PatchCore} &
    \shortstack{Streaming\\PatchCore\\(Euclidean)} &
    \shortstack{Streaming\\Mahalanobis\\PatchCore} &
    \shortstack{Streaming\\Mahalanobis\\+ mini-batch\\$k$-means} \\
    \midrule
    Meniscus & 37.430 & 9.084 & \textbf{8.098} & 8.417 \\
    \tabrowsep
    Bottom & 26.666 & 9.656 & \textbf{8.227} & 9.052 \\
    \tabrowsep
    Lyo & 30.289 & 7.795 & \textbf{7.677} & 8.246 \\
    \tabrowsep
    Mean & 31.462 & 8.845 & \textbf{8.001} & 8.571 \\
    \tabrowsep
    Max & 37.430 & 9.656 & \textbf{8.227} & 9.052 \\
    \bottomrule
  \end{tabular}}
  \caption{Industrial peak resident memory on the selected industrial
  datasets. The final rows report the mean and maximum of the dataset-level
  peaks. Values are reported in GB; lower is better.}
  \label{tab:industrial-ram}
\end{table}

\begin{table}[htbp]
  \centering
  \scriptsize
  \resizebox{\textwidth}{!}{%
  \begin{tabular}{@{}lcccc@{}}
    \toprule
    Dataset &
    \shortstack{Original\\PatchCore} &
    \shortstack{Streaming\\PatchCore\\(Euclidean)} &
    \shortstack{Streaming\\Mahalanobis\\PatchCore} &
    \shortstack{Streaming\\Mahalanobis\\+ mini-batch\\$k$-means} \\
    \midrule
    Meniscus & 9889.752 & 67.924 & \textbf{65.842} & 72.394 \\
    \tabrowsep
    Bottom & 6893.612 & 60.215 & \textbf{55.843} & 64.659 \\
    \tabrowsep
    Lyo & 7726.831 & 65.657 & \textbf{60.352} & 69.626 \\
    \tabrowsep
    Image-weighted mean & 8207.638 & 64.501 & \textbf{60.702} & 68.824 \\
    \bottomrule
  \end{tabular}}
  \caption{Industrial per-dataset per-image inference time on the selected
  industrial datasets. The final row reports the image-weighted mean across
  all test images. Values are reported in ms/image; lower is better.}
  \label{tab:industrial-inference-per-image-time}
\end{table}

\begin{figure}[htbp]
  \centering
  \includegraphics[width=0.92\textwidth]{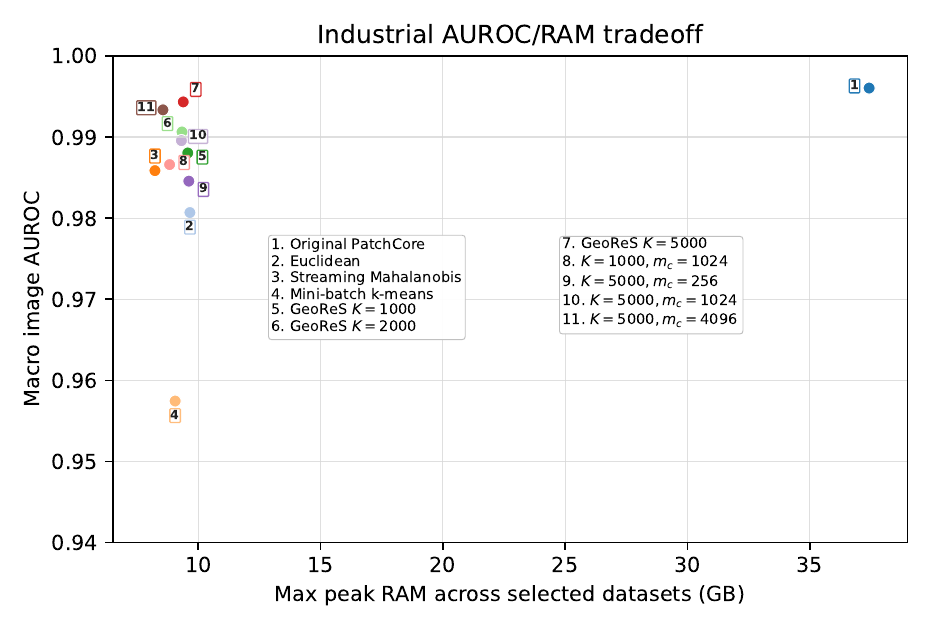}
  \caption{Industrial image-\acs{AUROC}/\acs{RAM} trade-off across the selected
  industrial datasets. Each point reports macro image \acs{AUROC} and the
  largest dataset-level peak \acs{RAM} for one reported method or ablation.}
  \label{fig:industrial-auroc-ram-pareto}
\end{figure}
\clearpage

\section{Discussion}
\label{sec:discussion}

\subsection{When Mahalanobis Distance Helps}
The Mahalanobis whitening transform is most useful when Euclidean distances in
the reduced feature space over- or under-weight directions because normal patch
embeddings are anisotropic and correlated.
The results support this interpretation, but they also make its scope precise.
On \acs{MVTecAD}, switching from the Euclidean streaming control to Streaming
\acs{MHPC} raises mean image \ac{AUROC} from \(0.979\) to \(0.989\).
This closes much of the image-level gap to offline PatchCore, whose
corresponding mean \ac{AUROC} is \(0.991\).
At the same time, the MVTec pixel-level \ac{AUROC} summary changes only from
\(0.965\) to \(0.978\) relative to the Euclidean streaming control and remains
slightly below the offline reference at \(0.980\).
Thus, on the public benchmark, covariance-aware retrieval primarily improves
image-level discrimination, while localisation remains a completeness result
rather than the central claim.

The selected industrial results show a stronger version of the same effect.
There, Mahalanobis retrieval improves the Euclidean streaming control from
mean \ac{AUROC} \(0.981\) to \(0.986\).
The \ac{GeoReS} \(K=2000\) ablation then raises mean \ac{AUROC} to \(0.991\),
while the high-budget \(K=5000\) configuration reaches \(0.994\).
This is consistent with the qualitative structure of the selected industrial
tasks, which cover distinct but comparably challenging inspection regimes.
Meniscus contains variable meniscus and fluid patterns caused by turbulence,
reflections, and local appearance changes. Bottom contains bottom-region
variability in amber glass ampoules with viscous liquid, where heavy glass
particles may be difficult to separate from nominal appearance. Lyo covers
lyophilised-cake product inspection, where relevant anomalies include visible
foreign particles, missing cake, and incorrect cake volume.
In these settings, covariance-aware scaling and residual-tail coverage can make
the normal support more useful for image-level retrieval, especially when the
defect evidence is small, localised, or subtle, and the corresponding nominal
appearance modes are sparsely sampled in the normal support.
The evidence should still be read conservatively, because the industrial suite
is small and image-level only.
Even so, it indicates that the proposed geometry and coverage mechanisms are
most useful when industrial defects are subtle or sparsely represented and when
training-time memory is constrained.

\subsection{Streaming-Compatible Training Trade-Offs}
The word ``streaming'' is used here in a bounded-memory training sense.
\acs{MHPC} consumes descriptors through mini-batches and does not require the
full normal patch matrix to be materialised, but it is not a strict online
one-pass learner.
The reduction, covariance, and memory-bank stages are fitted over separate
passes through a re-iterable training set; the \ac{GeoReS} ablation adds an
extra memory-bank pass because it first estimates residual coverage.

This distinction matters for the interpretation of the resource results.
The method trades additional sequential scans of the training set for a much
smaller peak resident memory footprint.
It is therefore most appropriate when the full feature pool is the operational
bottleneck, or when normal data are already curated in a dataset that can be
rescanned during model preparation.
Additional curated normal batches can be incorporated by continuing the same
fitting pipeline, but if the reduction or covariance transform changes, the
stored support set must remain consistent with the updated transformed space.
Under this interpretation, the term streaming captures the intended engineering
property: bounded-memory construction and extension of a PatchCore-style
support set, not autonomous online adaptation.

\subsection{Support-Set Construction and Budget}
The empirical comparison between \(k\)-center and mini-batch \(k\)-means is not
only a hyperparameter study, but a contrast between two different summarisation
principles.
\(k\)-center preserves representative support points chosen for coverage,
whereas mini-batch \(k\)-means replaces them with centroids.
The pure-\(k\)-center budget study then separates two distinct levers:
the local merge-reduce chunk size \(m_c\), which controls how much information is
retained before each reduction step, and the final bank budget \(K\), which
controls how many representatives survive globally.
On \acs{MVTecAD}, increasing \(m_c\) at fixed \(K=1000\) gives the clearest small
gain, matching the offline PatchCore mean \ac{AUROC} after rounding, but it
also raises mean training time substantially.
Increasing the final bank to \(K=5000\) does not improve the macro result in the
same way, which suggests that the final support set is not the dominant
bottleneck for this benchmark.
The industrial budget ladder shows the complementary side of the same
trade-off: the high-budget \(K=5000,m_c=4096\) configuration approaches the
larger \ac{GeoReS} result while retaining a much smaller \acs{RAM} footprint than
offline PatchCore, \(8.55\)~GB versus \(37.43\)~GB.
\ac{GeoReS} confirms the same picture from another angle: residual-tail
selection helps selected categories, especially \emph{screw}, but its aggregate
benefit is modest relative to the extra pass and higher inference cost.
For the reported MVTec configuration, the merge-reduce \(k\)-center bank is
therefore a pragmatic operating point, not merely a low-budget
approximation.

\subsection{Limitations}
A fully autonomous post-deployment continual-learning protocol remains outside
the scope of this study.
In particular, updating the reduction or Cholesky factor after deployment would
require the memory bank to be rebuilt or consistently transformed so that all
stored references remain in the same feature space.
Furthermore, the streaming variant assumes that the training set can be iterated
at least three times, which may not be feasible in strict single-pass scenarios.
Only \acs{MVTecAD} provides pixel-level masks, so localisation-oriented evaluation
can be studied publicly only on that benchmark; the selected industrial
datasets are instead assessed through image-level discrimination and resource
telemetry.
The accuracy of the streaming variant may also lag behind the original batch
PatchCore baseline on some categories because staged reduction and
merge-reduce support-bank construction do not optimise over the full patch pool
in one global batch.
Section~\ref{sec:results:memory} characterises this trade-off empirically.

\section{Conclusion}
\label{sec:conclusion}

We have presented \acs{MHPC}, a PatchCore extension that combines
covariance-aware retrieval with bounded-memory bank construction.
The method keeps the retrieval structure of PatchCore, but performs nearest
neighbour search after reduction, regularised covariance estimation, and
whitening; it also replaces full-pool bank construction with a merge-reduce
\(k\)-center summary that makes training-time memory explicit.

The empirical evidence is consistent with this design goal.
On \acs{MVTecAD}, evaluated as 15 category-specific datasets, Streaming
\acs{MHPC} remains close to the offline PatchCore reference at image level, with
mean \ac{AUROC} \(0.989\) versus \(0.991\), while reducing peak \acs{RAM} from
\(5.41\) to \(2.78\)~GB.
On the selected industrial datasets, the covariance-aware streaming variant
improves the Euclidean streaming control from \(0.981\) to \(0.986\) mean image
\ac{AUROC}, and the two-pass \ac{GeoReS} ablation reaches \(0.991\) at
\(K=2000\) and \(0.994\) at the higher-budget \(K=5000\) setting.
The canonical streaming configuration also reduces the industrial peak \acs{RAM} from
the offline PatchCore \(37.43\)~GB footprint to \(8.23\)~GB.
These results support a deliberately narrow conclusion: \acs{MHPC} is not a
universal replacement for full-memory PatchCore in every metric, but it is a
practical bounded-memory formulation that preserves most benchmark accuracy and
becomes especially useful when industrial normal data contain locally
under-represented nominal appearance modes.

Future work should focus on extending the industrial evidence with additional
production datasets and on collecting localisation labels or deployment streams
that allow calibration, localisation behaviour, and continual updates to be
evaluated directly.

\section*{Acknowledgments}
\label{sec:acknowledgments}
Part of this work was carried out during the first author's industrial PhD at
the University of Ferrara. The second author contributed while completing his
master's degree at the University of Ferrara.

\bibliographystyle{elsarticle-num}
\begingroup
\sloppy
\hbadness=10000
\bibliography{biblio}
\endgroup

\appendix

\section{Notation Summary}
\label{app:symbols}

\begingroup
\small
\setlength{\tabcolsep}{4pt}
\begin{tabular}{@{}>{\raggedright\arraybackslash}p{0.18\textwidth} >{\raggedright\arraybackslash}p{0.73\textwidth}@{}}
\toprule
\textbf{Symbol} & \textbf{Meaning} \\
\midrule
$x$ & Input image. \\
$\PP(x)=\{u^{(j)}\}_{j=1}^{n_x}$ & Patch set extracted from image $x$. \\
$u \in \mathbb{R}^{d_0}$ & Patch embedding before reduction. \\
$R$ & Fitted dimensionality reducer, instantiated by \acs{IPCA} in the streaming baseline. \\
$W \in \mathbb{R}^{d_0\times k}$ & Projection matrix in the centred \acs{PCA}/\acs{IPCA} reducer \(R(u)=W^\top(u-\bar{u}_R)\). \\
$\bar{u}_R$ & Mean used by the fitted \acs{PCA}/\acs{IPCA} reducer. \\
$\tilde{u} = R(u) \in \mathbb{R}^{k}$ & Reduced patch embedding. \\
$\muh$ & Mean of reduced embeddings estimated during streaming covariance fitting. \\
$\Sigma_{\mathrm{reg}}$ & Regularised covariance matrix in reduced space. \\
$L$ & Cholesky factor such that
$\Sigma_L=\Sigma_{\mathrm{reg}}+\delta I=LL^\top$. \\
$z = L^{-1}(\tilde{u}-\muh)$ & Whitened embedding used for retrieval. \\
$\M$ & Final memory bank used at inference. \\
$K$ & Memory-bank budget. \\
$m_c$ & Local coreset size in merge-reduce $k$-centre aggregation. \\
$P$ & Provisional \acs{GeoReS} coverage set. \\
$K_0, K_t$ & Main and residual-tail budgets in \acs{GeoReS}. \\
$\alpha$ & \acs{GeoReS} main-budget fraction used to set \(K_0\). \\
$q$ & Number of high-residual candidates retained in \acs{GeoReS}. \\
$r_P(z)$ & Residual of \(z\) with respect to provisional bank \(P\). \\
$T_q$ & Candidate set containing the \(q\) largest-residual normal patches. \\
$\delta(u,m)$ & Squared retrieval distance between transformed query patch \(z(u)\) and bank element \(m\). \\
$\tau(x)$ & Stabilising offset used in the Eq.~\eqref{eq:score}-style reweighting. \\
$w(x)$ & Local-support reweighting factor applied to the maximal patch score. \\
\bottomrule
\end{tabular}
\endgroup

\section{Whitening and Mahalanobis Equivalence}
\label{app:whitening-equivalence}

This appendix states formally the equivalence used in
Section~\ref{sec:method:whitening-regularization}.
Here \(d_{\mathrm{MH}}\) is used only as a compact mathematical symbol for
Mahalanobis distance.

\begin{lemma}[Mahalanobis--Euclidean equivalence via whitening]
\label{lem:whitening-mahalanobis}
Let $\Sigma \in \mathbb{R}^{d\times d}$ with $\Sigma \succ 0$, and
let $x,y \in \mathbb{R}^{d}$.
Then there exists an invertible matrix $B$ such that
\begin{equation}
  d_{\mathrm{MH}}(x,y) = \|B(x-y)\|_2.
\end{equation}
In particular, if $\Sigma = CC^\top$ is the Cholesky factorisation of
$\Sigma$ (with $C$ lower triangular and positive diagonal), then
\begin{equation}
  d_{\mathrm{MH}}(x,y) = \|C^{-1}(x-y)\|_2.
\end{equation}
\end{lemma}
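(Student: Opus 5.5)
The plan is to fix the definition first and then reduce everything to one algebraic identity. I take the Mahalanobis distance as
\[
d_{\mathrm{MH}}(x,y)=\sqrt{(x-y)^\top\Sigma^{-1}(x-y)},
\]
which is well defined because $\Sigma\succ 0$ implies $\Sigma^{-1}\succ 0$, so the quadratic form is nonnegative. Since both sides of each claimed equality are nonnegative, it suffices to prove the squared identities.

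For the general existence claim, I would exhibit any factorisation $\Sigma^{-1}=B^\top B$ with $B$ invertible. Two natural choices are the symmetric square root $B=\Sigma^{-1/2}$, obtained from the spectral decomposition $\Sigma=Q\Lambda Q^\top$ with $\Lambda$ having strictly positive diagonal, or the Cholesky-based choice below. Setting $v=x-y$, we then get
\[
\|Bv\|_2^2=v^\top B^\top Bv=v^\top\Sigma^{-1}v=d_{\mathrm{MH}}(x,y)^2 .
\]
Taking square roots finishes this part. Invertibility of $B$ follows because $B^\top B=\Sigma^{-1}$ is nonsingular.

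For the Cholesky statement, I would invoke the standard existence of the Cholesky factor of an SPD matrix: $\Sigma=CC^\top$ with $C$ lower triangular with positive diagonal. Then $\det C=\prod_i C_{ii}>0$, so $C$ is invertible. The key step is the inverse of a product:
\[
\Sigma^{-1}=(CC^\top)^{-1}=C^{-\top}C^{-1}=(C^{-1})^\top C^{-1}.
\]
Hence $B=C^{-1}$ is an admissible choice in the previous paragraph, and $d_{\mathrm{MH}}(x,y)=\|C^{-1}(x-y)\|_2$.

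Finally, I would connect the lemma to the method with $\Sigma=\Sigma_L$ and $C=L$. Because $z(u)-z(v)=L^{-1}(\tilde u-\muh)-L^{-1}(\tilde v-\muh)=L^{-1}(\tilde u-\tilde v)$, the mean cancels, and this gives the equivalence claimed in Section~\ref{sec:method:whitening-regularization}.

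There is no real obstacle in this argument. The only points that need care are stating the definition of $d_{\mathrm{MH}}$ explicitly, since the paper does not fix it before the lemma, and justifying the invertibility of $C$ from its positive diagonal.
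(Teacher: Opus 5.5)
Your proof is correct and follows essentially the paper's argument: the Cholesky factor exists, $C$ is invertible, $\Sigma^{-1}=C^{-\top}C^{-1}$ turns the quadratic form into $\|C^{-1}(x-y)\|_2^2$, and you take square roots. Your explicit definition of $d_{\mathrm{MH}}$, the symmetric-root alternative $B=\Sigma^{-1/2}$ (which the paper also mentions), and the mean-cancellation remark linking the lemma to $z(u)$ are harmless additions.
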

\begin{proof}
Since $\Sigma \succ 0$, the Cholesky factorisation $\Sigma=CC^\top$ exists and
$C$ is invertible.
Hence $\Sigma^{-1}=C^{-\top}C^{-1}$, and therefore
\begin{align}
  d_{\mathrm{MH}}^2(x,y)
  &= (x-y)^\top \Sigma^{-1}(x-y) \nonumber\\
  &= (x-y)^\top C^{-\top}C^{-1}(x-y) \nonumber\\
  &= \|C^{-1}(x-y)\|_2^2. \label{eq:mh_lemma}
\end{align}
Taking square roots proves
$d_{\mathrm{MH}}(x,y)=\|C^{-1}(x-y)\|_2$.
Geometrically, Mahalanobis distance is Euclidean distance in the whitened
space $z=C^{-1}x$ (equivalently $z=\Sigma^{-1/2}x$).
\end{proof}

\section{Covariance Regularisation for Numerical Stability}
\label{app:covreg-alternatives}

This appendix collects the covariance-regularisation details that support the
whitening step in Section~\ref{sec:method:whitening-regularization}.
Their purpose is to make the numerical assumptions explicit rather than to
define an additional ablation family.
All reported \acs{MHPC} baselines use fixed shrinkage with numerical
stabilisation; the analytical shrinkage rules below are standard reference
policies for covariance estimation and are not part of the experimental
comparison in this paper.
Given an empirical covariance matrix $\hat{\Sigma} \in \mathbb{R}^{d\times d}$,
the common shrinkage form is
\begin{equation}
  \Sigma_{\mathrm{shr}} = (1-\lambda)\hat{\Sigma} + \lambda \bar{\sigma} I,
  \qquad
  \bar{\sigma} = \frac{\operatorname{tr}(\hat{\Sigma})}{d},
  \qquad
  \lambda \in [0,1].
\end{equation}

\paragraph{\acs{OAS}}
\acl{OAS}~\cite{chen2010shrinkage} selects $\lambda$
analytically:
\begin{equation}
  \lambda_{\mathrm{OAS}} =
  \operatorname{clip}_{[0,1]}\!\left(
    \frac{(1-\tfrac{2}{d})\operatorname{tr}(\hat{\Sigma}^2)
          + \operatorname{tr}(\hat{\Sigma})^2}
         {(N+1-\tfrac{2}{d})
          \bigl(\operatorname{tr}(\hat{\Sigma}^2)
                - \tfrac{\operatorname{tr}(\hat{\Sigma})^2}{d}\bigr)}
  \right). \label{eq:oas}
\end{equation}
It is designed to improve conditioning when the sample size is limited relative
to the feature dimension.

\paragraph{Ledoit-Wolf / \ac{RBLW}}
Ledoit-Wolf~\cite{ledoit2004well} provides an analytical shrinkage rule that
minimises expected Frobenius loss.
In strict streaming settings, where the full feature matrix is not retained,
the \acl{RBLW} approximation can be used:
\begin{equation}
  \lambda_{\mathrm{RBLW}} =
  \operatorname{clip}_{[0,1]}\!\left(
    \frac{(\tfrac{N-2}{N})\operatorname{tr}(\hat{\Sigma}^2)
          + \operatorname{tr}(\hat{\Sigma})^2}
         {(N+2)\bigl(\operatorname{tr}(\hat{\Sigma}^2)
                     - \tfrac{\operatorname{tr}(\hat{\Sigma})^2}{d}\bigr)}
  \right). \label{eq:rblw}
\end{equation}

\paragraph{Fixed Shrinkage}
A user-specified constant $\lambda \in [0,1]$ is used for all categories and
runs, providing deterministic behaviour and direct control over regularisation
strength.
This is the policy adopted by the reported baseline in this paper
($\lambda=0.07$).

\paragraph{Jitter-only}
No shrinkage is applied ($\lambda=0$), and positive-definiteness is enforced
only by adaptive diagonal jitter during Cholesky factorisation.
This limiting case clarifies the role of shrinkage, but it is generally less
robust when $\hat{\Sigma}$ is poorly conditioned and lies outside the reported
ablation set.

\paragraph{Eigenvalue Floor}
Independently of the shrinkage policy, an optional eigenvalue floor can be
applied to $\Sigma_{\mathrm{shr}}$:
\begin{equation}
  \gamma_i \leftarrow \max(\gamma_i,\epsilon),
  \qquad
  \epsilon = \epsilon_{\mathrm{rel}}\max\{|\bar{\sigma}|,10^{-12}\},
\end{equation}
where $\{\gamma_i\}$ are the eigenvalues of $\Sigma_{\mathrm{shr}}$ and
\(\bar{\sigma}=d^{-1}\operatorname{tr}(\Sigma_{\mathrm{shr}})\).
This produces the final matrix $\Sigma_{\mathrm{reg}}$ used in the whitening
step.

\paragraph{Adaptive jittered Cholesky}
Even after shrinkage and eigenvalue flooring, numerical issues may prevent a
successful Cholesky factorisation.
The baseline therefore computes
\begin{equation}
  L = \operatorname{chol}(\Sigma_{\mathrm{reg}} + \delta I),
\end{equation}
with diagonal jitter selected from the geometric sequence
\begin{equation}
  \delta \in \{0,\;\delta_{\min},\;\delta_{\min}m,\;\delta_{\min}m^2,\;\dots\},
\end{equation}
retrying until factorisation succeeds or until $\delta > \delta_{\max}$.
In the canonical configuration we use
\begin{equation}
  \delta_{\min}=10^{-12}, \qquad m=10, \qquad \delta_{\max}=1.
\end{equation}

\section{Memory-Bank Constructors}
\label{app:aggregation-alternatives}

This appendix defines the memory-bank constructors used in the experiments.
Let
\begin{equation}
  Z = \{z_n\}_{n=1}^{N}, \qquad z_n \in \mathbb{R}^{k},
\end{equation}
and let $K$ be the target memory-bank budget.
For streaming constructors, \(Z\) is the conceptual concatenation of the
transformed mini-batches seen during training; only the offline reference
requires materialising the full set before compression.
Each strategy defines
\begin{equation}
  \M = \mathcal{G}(Z), \qquad |\M| \le K.
\end{equation}

\paragraph{Offline greedy coreset}
The original PatchCore reference materialises the full normal patch pool and
then applies greedy farthest-first coreset selection~\cite{Roth2021}.
Starting from an initial center set \(C_1\), each subsequent representative is
chosen as
\begin{equation}
  z^\star \in
  \operatorname*{arg\,max}_{z\in Z}
  \min_{c\in C_\ell}\|z-c\|_2,
  \qquad
  C_{\ell+1}=C_\ell\cup\{z^\star\}.
\end{equation}
After \(K\) selections, the memory bank is \(\M=C_K\).
This operator is the offline reference because it requires access to the full
patch pool before compression.

\paragraph{Merge-reduce \(k\)-center}
The canonical streaming constructor uses the same coverage principle under a
bounded-memory merge-reduce scheme.
For any candidate set \(C\), define
\begin{equation}
  d_{\min}(z,C)=\min_{c\in C}\|z-c\|_2.
\end{equation}
Each stream chunk \(Z_t\) is reduced to a local farthest-first summary
\(C_t\) of size at most \(m_c\).
Pairs of summaries are then recursively merged and reduced,
\begin{equation}
  C_{\ell+1}=
  \operatorname{Reduce}\!\left(C_{\ell}^{(1)}\cup C_{\ell}^{(2)},m_c\right),
\end{equation}
where \(\operatorname{Reduce}\) denotes greedy farthest-first selection under
the same distance.
The final reduction produces \(|\M|\le K\) representatives.
Unlike centroid methods, this preserves observed normal patches as retrieval
anchors.

\paragraph{Mini-batch \(k\)-means}\mbox{}\\*
This ablation uses centroids rather than observed patches.
The standard \(k\)-means objective is
\begin{equation}
  \min_{\{c_j\}_{j=1}^{K}}
  \sum_{n=1}^{N}\left\|z_n-c_{a(z_n)}\right\|_2^2,
  \qquad
  a(z_n)=\operatorname*{arg\,min}_{j}\|z_n-c_j\|_2^2.
\end{equation}
The reported streaming ablation approximates this objective by initialising a
bounded set of centroids and applying Robbins--Monro updates as mini-batches are
read~\cite{sculley2010web}: each sample is assigned to its nearest centroid,
the corresponding per-centroid count is updated, and the centroid is moved with
the resulting count-based step size.
The resulting bank contains centroids rather than observed patch descriptors,
which is why this comparison isolates the effect of the constructor geometry.

\paragraph{\acs{GeoReS}}
\acl{GeoReS} is the targeted two-pass \(k\)-center variant used in this
ablation.
It tests whether explicit residual-tail coverage improves the streaming bank.
It first builds a provisional bank
\begin{equation}
  P=\mathcal{G}_K(Z)
\end{equation}
with the merge-reduce constructor, then scores normal patches by their residual
coverage error,
\begin{equation}
  r_P(z)=\min_{p\in P}\|z-p\|_2.
\end{equation}
The final budget is split into \(K_0\) main representatives and \(K_t=K-K_0\)
tail representatives.
The main bank is
\begin{equation}
  M_0=\mathcal{G}_{K_0}(Z),
\end{equation}
and the tail candidates are the \(q\) largest-residual normal patches,
\begin{equation}
  T_q=\operatorname{Top}_{q}\{\,z\in Z:\ r_P(z)\,\}.
\end{equation}
Tail representatives are selected greedily from \(T_q\) against the current
main-plus-tail set:
\begin{equation}
  t_j \in \operatorname*{arg\,max}_{z\in T_q\setminus\{t_1,\ldots,t_{j-1}\}}
  \min_{m\in M_0\cup\{t_1,\ldots,t_{j-1}\}}\|z-m\|_2 .
\end{equation}
The final bank is the budgeted, deduplicated completion
\begin{equation}
  \M_{\mathrm{GeoReS}}
  =\Pi_K\!\left(M_0\cup\{t_1,\ldots,t_{K_t}\};Z\right).
\end{equation}
Here \(\Pi_K\) removes duplicate representatives, fills any remaining capacity
from the coverage pool, and truncates the bank to the target budget.
Because \ac{GeoReS} requires an additional pass to identify residual-heavy
normal regions, it remains an ablation rather than the canonical streaming
constructor.

\section{Welford Algorithm for Streaming Covariance}
\label{app:welford}

The Welford algorithm~\cite{welford1962note} provides a numerically stable
method for computing the mean and covariance of a data stream without storing
the full sample matrix.
For a sequence of batches $\{U'_t\}$, let $n$ denote the current sample count,
$\muh$ the running mean, and $M_2$ the running unnormalised second
central moment matrix.
For each new batch $U'_t$ with $n_b$ samples, batch mean $\muh_b$,
and batch second central moment $M_{2,b}$, the update follows the standard
parallel Welford combination rule discussed in Section~\ref{sec:method:whitening-regularization}.
The unbiased covariance estimator is $\hat{\Sigma} = M_2 / (n-1)$.
This formulation avoids catastrophic cancellation and is equivalent to the
parallel computation described in~\cite{chan1979updating}.

\end{document}